\def\expandafter\normalsize\expandafter{%
    \normalsize
    \setlength\abovedisplayskip{2pt}
    \setlength\belowdisplayskip{2pt}
    \setlength\abovedisplayshortskip{2pt}
    \setlength\belowdisplayshortskip{2pt}
}
\newtheorem{lem}{Lemma}
\newcommand*\circled[1]{\tikz[baseline=(char.base)]{
		\node[shape=circle,draw,inner sep=0.5pt] (char) {#1};}}
\newcommand{\Junshan}[1]{  \ifthenelse{\boolean{showcomments}}
	{ \textcolor{red}{(Junshan:  #1)}} {}  }
\newcommand{\Hang}[1]{  \ifthenelse{\boolean{showcomments}}
	{ \textcolor{blue}{(Hang:  #1)}} {}  }
\newcommand{\Sheng}[1]{  \ifthenelse{\boolean{showcomments}}
	{ \textcolor{green}{(Sen:  #1)}} {}  }
\title{Adaptive Ensemble Q-learning: Minimizing Estimation Bias via Error Feedback}
\author{%
  Hang Wang \\
  Arizona State University\\
  Tempe, Arizona, USA \\
  \texttt{hwang442@asu.edu} \\
   \And
   Sen Lin \\
  Arizona State University\\
 Tempe, Arizona, USA \\
  \texttt{slin70@asu.edu} \\
   \AND
   Junshan Zhang \\
  Arizona State University\\
 Tempe, Arizona, USA \\
  \texttt{Junshan.Zhang@asu.edu} \\
}
\begin{document}

\maketitle

\begin{abstract}
The ensemble method is a promising way to mitigate the overestimation issue in Q-learning, where
 multiple function approximators are used to estimate the action values. It is known that the estimation bias hinges heavily on the ensemble size (i.e., the number of  Q-function approximators used in the target), and that determining the `right' ensemble size is highly nontrivial, because of the time-varying nature of the function approximation errors during the learning process. To tackle this challenge, we first 
derive an upper bound and a lower bound on the  estimation bias, based on which the ensemble size is  adapted to drive the bias to be nearly zero, thereby coping with the impact of the time-varying approximation errors accordingly. Motivated by the theoretic findings,
we advocate that the ensemble method can be combined with Model Identification Adaptive Control (MIAC) for effective ensemble size adaptation. Specifically, we devise Adaptive Ensemble Q-learning (AdaEQ), a generalized ensemble method with two key steps: (a) approximation error characterization which serves as the feedback for flexibly controlling the ensemble size, and (b) ensemble size adaptation tailored towards minimizing the estimation bias.   Extensive experiments are carried out to show that AdaEQ can  improve the learning performance  than the existing methods for the MuJoCo benchmark.



\end{abstract}

\section{Introduction}

Thanks to recent advances in function approximation methods using deep neural networks \cite{liang2016deep}, 
Q-learning \cite{watkins1992q} has been widely used to solve reinforcement learning (RL) problems in a variety of applications, e.g., robotic control \cite{modares2013adaptive,ibarz2021train},  path planning \cite{konar2013deterministic,nasiriany2019planning}  and production scheduling \cite{wang2005application,mao2016resource}. Despite the great success, 
it is well recognized that Q-learning may suffer from the notorious overestimation bias \cite{thrun1993issues,van2016deep,van2018deep,fujimoto2019off, zhang2017weighted}, which would significantly impede the learning efficiency. Recent work \cite{fujimoto2018addressing,haarnoja2018soft} indicates that this problem also persists in the actor-critic setting.
To address this issue, the ensemble method \cite{ kumar2019stabilizing, agarwal2020optimistic, peer2021ensemble,dietterich2000ensemble} has emerged as a promising solution in which  multiple Q-function approximators  are used to get better estimation of the action values. 
Needless to say, the ensemble size, i.e., the number of Q-function approximators used in the target, has intrinsic impact on Q-learning. Notably, it is shown in \cite{chen2021randomized, lan2020maxmin} that while a large ensemble size could completely remove the overestimation bias, it may go to the other extreme and result in underestimation bias and unstable training,
which is clearly not desirable. 
Therefore, instead of simply increasing the ensemble size to mitigate the overestimation issue, a fundamental question to ask is:`` \textit{Is it possible to determine the right ensemble size on the fly so as to minimize the estimation bias}?''

Some existing ensemble methods \cite{anschel2017averaged,lee2020sunrise,lan2020maxmin}   adopt a trial-and-error strategy to search for the ensemble size, which would be time-consuming and require  a lot of human engineering for different RL tasks.  The approximation error of the Q-function during the learning process  plays a  nontrivial role in the selection of the ensemble size, since it directly impacts the Q-target estimation accuracy. This however remains not well understood. In particular, the fact that the approximation error is time-varying, due to the iterative nature of Q-learning \cite{zhang2021importance, bertsekas2019reinforcement},  gives rise to the question that whether a \textit{fixed} ensemble size should be used in the learning process.
To answer this question, we   show   in Section \ref{subsec:example} that using a fixed ensemble size is  likely to lead to either overestimation or underestimation bias, and the bias may  shift between overestimation and underestimation because of the time-varying approximation error, calling for an adaptive ensemble size so as to drive the bias close to zero based on the underlying learning dynamics.
\begin{wrapfigure}{r}{0.4\textwidth}
    \includegraphics[width=0.4\textwidth]{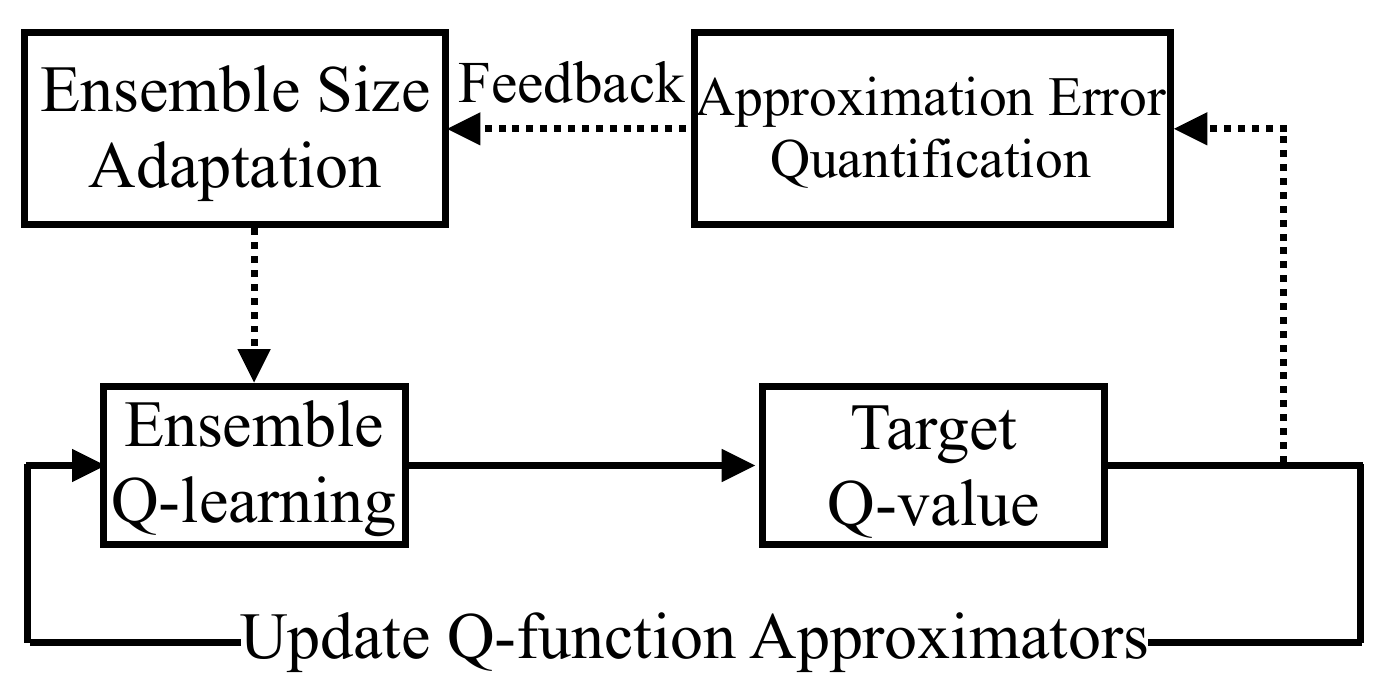}
  \caption{A sketch of the adaptive ensemble Q-learning (AdaEQ).
  }\label{fig:feedback}
  \vspace{-0.2in}
\end{wrapfigure}

Thus motivated, in this work we study effective ensemble size adaptation to minimize the estimation bias that hinges heavily on the time-varying approximation errors during the learning process. To this end, we first 
characterize the relationship among the ensemble size, the function  approximation errors, and the estimation bias,  by deriving an upper bound and a lower bound on the estimation bias. Our findings reveal that the ensemble size should be selected adaptively in a way to cope with the impact of the time-varying approximation errors. Building upon the theoretic results,   we cast the estimation bias minimization as an adaptive control problem 
where 
 the approximation error during the learning process is treated as the control object, and the ensemble size is adapted based on the feedback of the control output, i.e., the value of the approximation error from the last iteration.
The key idea in this approach  is inspired from the classic Model Identification Adaptive Control (MIAC) framework \cite{astrom1987adaptive,ortega1989robustness}, where at each step the current system identification of the control object is fed back to adjust the controller, and consequently a new control signal is devised following the updated control law. 

One main contribution of this work lies in the development of AdaEQ, a generalized ensemble method  for the ensemble size adaptation, aiming to minimize the estimation bias during the learning process.    Specifically,  the approximation error in each iteration  
 is quantified by comparing the difference between the Q-estimates and the Monte Carlo return using the current learned policy over a testing trajectory \cite{thrun1993issues, lan2020maxmin}. Inspired by MIAC, the approximation error serves as the feedback to adapt the ensemble size. 
Besides, we introduce a `tolerance' parameter in the adaptation mechanism to balance the control tendency towards positive or negative bias during the learning process.  
In this way, AdaEQ can encompass  other existing ensemble methods as special cases, including Maxmin \cite{lan2020maxmin}, by properly setting this hyperparameter. A salient feature of the feedback-adaptation mechanism is that it can be used effectively in conjunction with both standard Q-learning \cite{mnih2013playing} and actor-critic methods \cite{sutton2018reinforcement,haarnoja2018soft}. 
Experimental results on the continuous-control MuJoCo benchmark \cite{todorov2012mujoco} show that AdaEQ is robust to the initial ensemble size in different environments, and achieves higher average return, thanks to keeping the estimation bias close to zero, when compared to the  state-of-the-art ensemble methods such as REDQ \cite{chen2021randomized} and Average-DQN \cite{anschel2017averaged}.

{\bf{Related Work.}} Bias-corrected Q-learning \cite{lee2013bias} introduces the bias correction term to reduce the overestimation bias. Double Q-learning is proposed in  \cite{hasselt2010double, van2016deep} to address the overestimation issue in vanilla Q-learning,  by leveraging two independent Q-function approximators to estimate the maximum Q-function value in the target. {S-DQN and S-DDQN use the softmax operator instead of the max operator to further reduce the overestimation bias \cite{song2019revisiting}. Self-correcting Q-learning aims to balance the underestimation in double Q-learning and overestimation in classic Q learning by introducing a 
new self-correcting estimator \cite{zhu2020self}. Weighted Q-learning proposes a new estimator based on the weighted average of the sample means, and conducts the empirical analysis in the discrete action space \cite{d2016estimating}. }
Weighted Double Q-learning \cite{zhang2017weighted} uses the Q-approximator together with the double Q-approximator to balance the overestimation and underestimation bias. Nevertheless, acquiring independent approximators is often intractable for large-scale tasks. To resolve this issue, the Twin-Delayed Deep Deterministic policy gradient algorithm (TD3) \cite{fujimoto2018addressing} and Soft Actor-Critic (SAC) \cite{haarnoja2018soft} have been devised to take the minimum over two approximators in the target network. Along a different avenue, the ensemble-based methods generalize double Q-learning to correct the overestimation bias by increasing the number of Q-function approximators. Particularly, Average-DQN \cite{anschel2017averaged} takes the average of multiple approximators in the target to reduce the overestimation error, and Random Ensemble Mixture (REM) \cite{agarwal2020optimistic} estimates the target value using the random convex combination of the approximators. It is worth noting that both Average-DQN and REM cannot completely eliminate the overestimation bias. Most recently, Maxmin Q-learning \cite{lan2020maxmin} defines a proxy Q-function by choosing the minimum Q-value for each action among all approximators. Similar to Maxmin, Random Ensembled Q-learning (REDQ) \cite{chen2021randomized} formulates the proxy Q-function by choosing only a subset of the ensemble. Nevertheless, both Maxmin and REDQ use a fixed ensemble size. In this study, we introduce an adaptation mechanism for the ensemble size  to drive the estimation bias to be close to zero, thereby mitigating the possible overestimation and underestimation issues.




\section{Impact of Ensemble Size on Estimation Bias}
\subsection{Ensemble Q-learning}

As is standard, we consider a Markov decision process (MDP) defined by the tuple $\langle \mathcal{S}, \mathcal{A}, {P}, {r}, {\gamma} \rangle$, where $\mathcal{S}$ and $\mathcal{A}$ denote the state space and the action space, respectively. $P(s'|s,a): \mathcal{S} \times \mathcal{A} \times \mathcal{S} \rightarrow [0,1]$ denotes the probability transition function from current state $s$ to the next state $s'$ by taking action $a \in \mathcal{A}$, and $r(s,a): \mathcal{S} \times \mathcal{A} \rightarrow \mathbb{R}$ is the corresponding reward. $\gamma \in (0,1]$ is the discount factor. At each step $t$, the agent observes the state $s_t$, takes an action $a_t$ following a policy $\pi: \mathcal{S} \rightarrow \mathcal{A}$, receives the reward $r_t$, and evolves to a new state $s_{t+1}$. The objective is to find an optimal policy $\pi^{*}$ to maximize the discounted return $R = \sum_{t=0}^{\infty}\gamma^{t}r_t$. 


By definition, Q-function is the expected return when choosing action $a$ in state $s$ and following with the policy $\pi$: $Q^{\pi} = \mathbf{E}[\sum_{t=0}^{\infty}\gamma^{t}r_t(s_t,a_t)|s_0=s,a_0=a]$. Q-learning is an off-policy value-based method that aims at learning the optimal Q-function $Q^{*}: \mathcal{S} \times  \mathcal{A} \rightarrow \mathbb{R}$, where the optimal Q-function is a fixed point of the Bellman optimality equation \cite{bellman1958dynamic}:
\begin{equation}
  \textstyle   \mathcal{T}Q^{*}(s,a) = r(s,a) +\gamma\mathbf{E}_{s'\sim P(s'|s,a)}\left[ \max_{a'\in \mathcal{A}}Q^{*}(s',a')\right].
\end{equation}
Given a transition sample $(s,a,r, s')$,   the Bellman operator can be employed to update the Q-function as follows:
\begin{equation}\label{eqn:target}
  \textstyle   Q(s,a) \leftarrow (1-\alpha)Q(s,a) + \alpha y, \quad y := r + \gamma \max_{a'\in \mathcal{A}}Q(s',a'). 
\end{equation}
where $\alpha$ is the step size and $y$ is the target. Under some conditions, Q-learning can  converge to the optimal fixed-point solution asymptotically \cite{tsanikidis2020power}. In deep Q-learning, the Q-function is approximated by a neural network, and it has been shown \cite{van2016deep} that the approximation error, amplified by  the $\max$ operator in the target, results in the overestimation phenomena. One promising approach to address this issue is the ensemble Q-learning method, which is the main subject of this study.

{\bf{The Ensemble Method.}} 
Specifically,  the ensemble method maintains $N$ separate approximators $Q^1, Q^2, \cdots, Q^N$ of the Q-function, based on which  a subset of these approximators is used to devise a proxy Q-function. For example, in Average-DQN \cite{anschel2017averaged}, the proxy Q-function is obtained by computing the average value over all $N$ approximators to reduce the overestimation bias:
\begin{align*}
  \textstyle  Q^{\textrm{ave}}(\cdot) =\frac{1}{N} \sum_{i=1}^N Q^{i}(\cdot).
\end{align*}
However, the average operation cannot completely eliminate the overestimation bias, since the average of the overestimation bias is still positive. To tackle this challenge, Maxmin \cite{lan2020maxmin} and REDQ \cite{chen2021randomized} take the `min' operation over a subset $\mathcal{M}$ ( size $M$) of the ensemble:
\begin{align}\label{eqn:proxy}
   \textstyle  Q^{\textrm{proxy}}(\cdot) =\min_{i \in \mathcal{M}} Q^{i}(\cdot).
\end{align}
The target value in the ensemble-based Q-learning is then computed as $y=r + \max_{a'\in \mathcal{A}}Q^{\textrm{proxy}}$. It is worth noting that in the existing studies, the in-target ensemble size $M$, pre-determined for a given environment, remain fixed in the learning process.

\subsection{An Illustrative Example}\label{subsec:example}

It is known that  the determination of the optimal ensemble size is highly nontrivial, and  a poor choice of the ensemble size would degrade the performance of ensemble Q-learning significantly \cite{lan2020maxmin}. 
As mentioned earlier, it is unclear a priori if a fixed ensemble size should be used in the learning process. In what follows, we use an example to illustrate the potential pitfalls in the ensemble methods  by examining the sensitivity of the estimation bias to the ensemble size \cite{chen2021randomized, lan2020maxmin}. 

\begin{figure}[t!]
     \centering
     \begin{subfigure}{0.24\textwidth}
         \centering
         \includegraphics[width=1.0\textwidth]{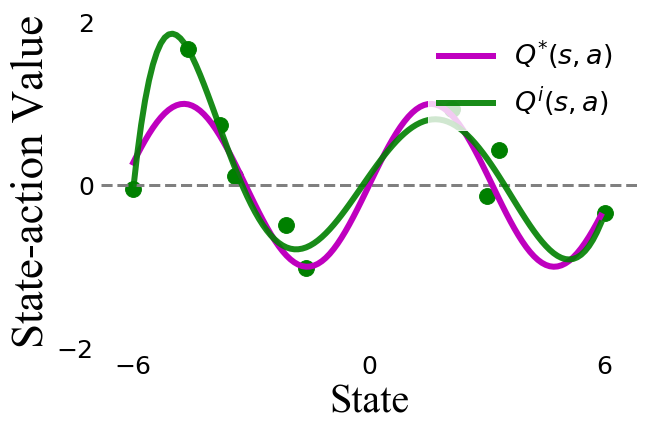}
         \caption{Function approximation.}
         \label{fig:part1}
     \end{subfigure}
     \begin{subfigure}{0.24\textwidth}
         \centering
         \includegraphics[width=\textwidth]{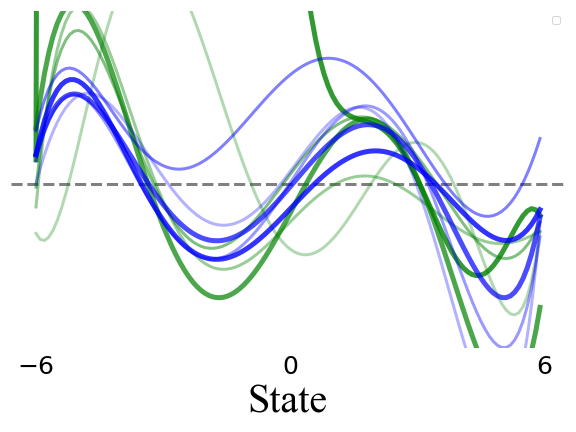}
         \caption{Five function approximators.}
         \label{fig:part2}
     \end{subfigure}
     \hfill
     \begin{subfigure}{0.24\textwidth}
         \centering
         \includegraphics[width=\textwidth]{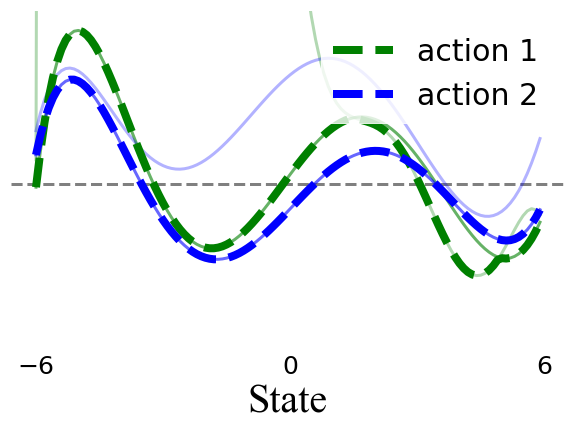}
         \caption{Ensemble via `$\min$' operator.}
         \label{fig:part3}
     \end{subfigure}
     \begin{subfigure}{0.24\textwidth}
         \centering
         \includegraphics[width=\textwidth]{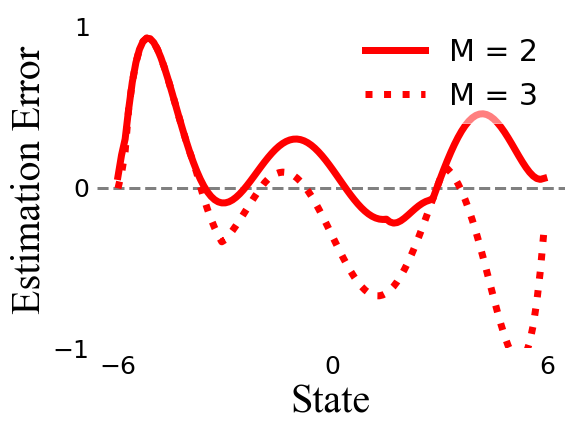}
         \caption{Estimation error.}
         \label{fig:part4}
     \end{subfigure}
    \caption{Illustration of   estimation bias in the ensemble method. (a) Each approximator is fitted to the noisy values (green dots) at the sampled states independently. (b) Five Q-function approximators are obtained  for both actions (green lines and blue lines). (c) Apply the $\min$ operator over $M$ ($M=3$) randomly selected approximators to obtain a proxy approximator for each action. (d) The estimation error is obtained by comparing the underlying true value (purple line in (a)) and the target value using the proxy approximator. }
        \label{fig:toy}
\end{figure}
\begin{figure}[t!]
     \centering
     \begin{subfigure}{0.45\textwidth}
         \centering
         \includegraphics[width=\textwidth]{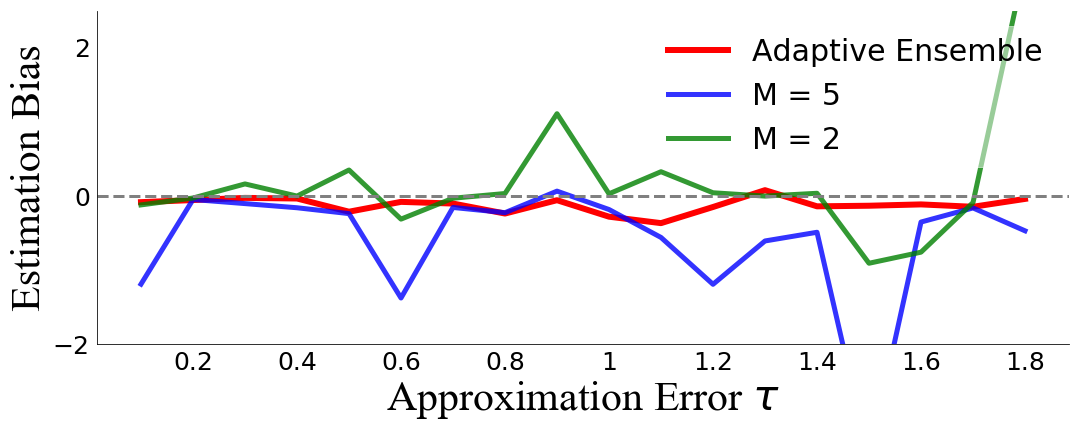}
         \caption{Estimation bias vs. $\tau$.}
         \label{fig:noise}
     \end{subfigure}
     \hfill
     \begin{subfigure}{0.45\textwidth}
         \centering
          \includegraphics[width=\textwidth]{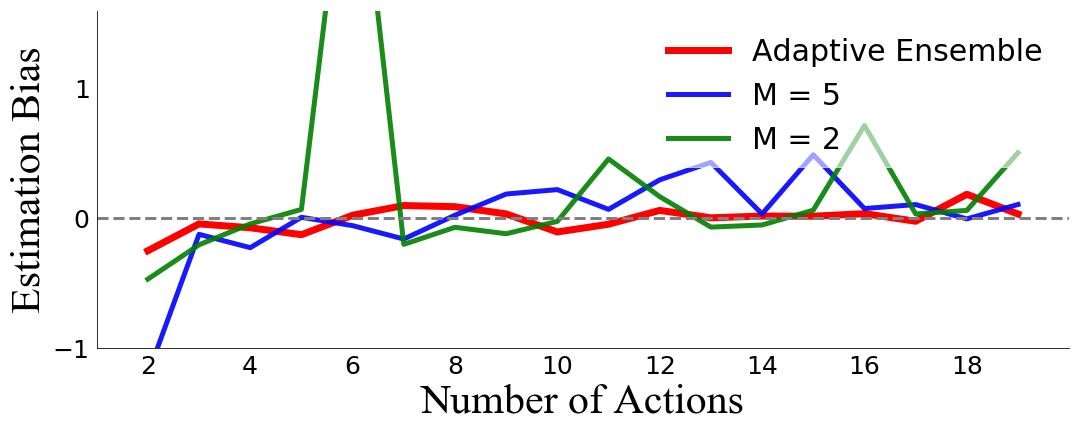}
         \caption{Estimation bias vs. numbers of actions.}
         \label{fig:action}
     \end{subfigure}
        \caption{Illustration of overestimation and underestimation phenomena for different ensemble sizes. }
        \label{fig:sensitive}
\end{figure}

Along the same line as in \cite{van2016deep},  we consider an example with a real-valued continuous state space. In this example, there are two discrete actions available at each state and  the optimal action values  depend only on the state, i.e.,  in each state both actions result in the same optimal value $Q^{*}(s,\cdot)$, which is assumed to be  $Q^{*}(s,\cdot)=\sin(s)$. Figure \ref{fig:toy} demonstrates how the ensemble method is carried out in four stages:

(I) For each Q-function approximator $Q^i$, $i=1,2,\cdots,5$,  we first generate 10 noisy action-value samples independently (green dots in Figure \ref{fig:part1}). Let $e^i(s, a)$ denote the  \textit{approximation error} of  $Q^i$: 
\begin{equation}\label{eqn:error}
  \textstyle   Q^i(s,a) = Q^{*}(s,a) + e^i(s,a),~~\mbox{with}~~ e^i(s,a) \sim \mathcal{U}(-\tau_i,\tau_i),
\end{equation}
where $\tau_i \sim \mathcal{U}(0,\tau)$  models the approximation error distribution for the $i$-th approximator. Note that the assumption on the uniform error distribution is commonly used to indicate that both positive and negative approximation error are possible in Q-function approximators  \cite{thrun1993issues}\cite{lan2020maxmin}\cite{chen2021randomized}. 

(II) Next, Figure \ref{fig:part2} illustrates the ensemble ($N=5$) of approximators for two actions, where each approximator is a $6$-degree polynomial that fits the noisy values at sampled states. 

(III) Following the same ensemble approach in \cite{chen2021randomized}\cite{lan2020maxmin}, we randomly choose  $M$ approximators from the ensemble and take the minimum over them to obtain a proxy approximator for each action, resulting in 
the dashed lines in Figure \ref{fig:part3}. 

(IV) Finally, the maximum action value of the proxy approximator is used as the target to update the current approximators. To evaluate the  target value \textit{estimation error}, Figure \ref{fig:part4} depicts the difference between the obtained target value and the underlying true value when using different ensemble size $M$.  As in \cite{van2016deep}, we utilize the average estimation error (i.e., estimation bias) to quantify the performance of current approximators. For example, when the ensemble size $M=2$, the red line is above zero for most states, implying the overestimation tendency in the target. Clearly, Figure \ref{fig:part4} indicates that the estimation bias is highly dependent on the ensemble size, and even a change of $M$ can lead the shift from overestimation to underestimation.   
Since the Q-function approximation error of each approximator  changes over time in the training process \cite{bertsekas2019reinforcement} (examples for this phenomenon can be found in Appendix B.3), we next analyze the impact of the ensemble size on the estimation bias under different approximation error distributions. As shown in Figure \ref{fig:noise}, with a fixed ensemble size $M$, the estimation bias may shift between positive and negative and  be `dramatically' large for some error distributions. In light of this observation, departing from using a fixed size, we  advocate to adapt the in-target ensemble size, e.g., set $M=4$ when the noise parameter $\tau > 1.5$ and  $M=3$ otherwise. The estimation bias resulted by this adaptation mechanism is much closer to zero. Besides, Figure \ref{fig:action} characterizes the estimation bias under different action spaces, which is also important considering that different tasks normally have different action spaces and the number of available actions may vary in different states even for the same task. The adaptive ensemble approach is clearly more robust in our setting. In a nutshell, both Figure \ref{fig:noise} and \ref{fig:action} suggest that a fixed ensemble size would not work well to minimize the estimation bias during learning for different tasks. This phenomenon has also been observed in the empirical results \cite{lan2020maxmin}. In stark contrast, adaptively changing the ensemble size based on the approximation error indeed can help to reduce the estimation bias in different settings.

\section{Adaptive Ensemble Q-learning (AdaEQ)}
Motivated by the illustrative example above, we next devise a generalized ensemble method with ensemble size adaptation to drive the estimation bias to be close to zero,   by taking into consideration the time-varying feature of the approximation error during the learning process.
Formally, we consider an ensemble of $N$ Q-function approximators, i.e., $\{Q_i\}_{i=1}^N$, with each approximator initialized independently and randomly. We use the minimum of a subset $\mathcal{M}$ of the $N$ approximators in the Q-learning target as in \eqref{eqn:proxy}, where the size of subset $|\mathcal{M}| = M \leq N$.

\subsection{Lower Bound and Upper Bound on Estimation Bias}\label{sec:bounds}

 We first answer the following  key question:``\textit{How does the approximation error, together with the ensemble size, impact the estimation bias}?".  To this end,
based on \cite{thrun1993issues}, we  characterize the intrinsic relationship among the ensemble size $M$, the Q-function approximation error and the estimation bias, and derive an upper bound and a lower bound on the bias in the tabular case.  Without loss of generality,  we assume that for each state $s$, there are $A$ available actions. 

Let $e^i(s,a)\triangleq Q^i(s,a)-Q^{\pi}({s,a})$ be the approximation error for the $i$-th Q-function approximator, where $Q^{\pi}({s,a})$ is the ground-truth of the Q-value for the current policy $\pi$. By using  \eqref{eqn:proxy} to compute the target Q-value, we define the estimation error in the Bellman equation for transition $(s,a,r,s')$ as $Z_{M}$:
\begin{align*}
   Z_{M}\triangleq& r+\gamma  \textstyle\max_{a'\in \mathcal{A}} \min_{i\in \mathcal{M}} Q^i(s',a')-\left(r + \max_{a'\in \mathcal{A}} Q^{\pi}(s',a')\right).
\end{align*}
Here a positive $\mathbf{E}[Z_{M}]$ implies overestimation bias while a negative $\mathbf{E}[Z_{M}]$ implies underestimation bias.
Note that we use the subscription $M$ to emphasize that the estimation bias is  intimately related to   $M$.


\begin{figure}[t]
     \centering
     \begin{subfigure}{0.3\textwidth}
         \centering
         \includegraphics[width=\textwidth]{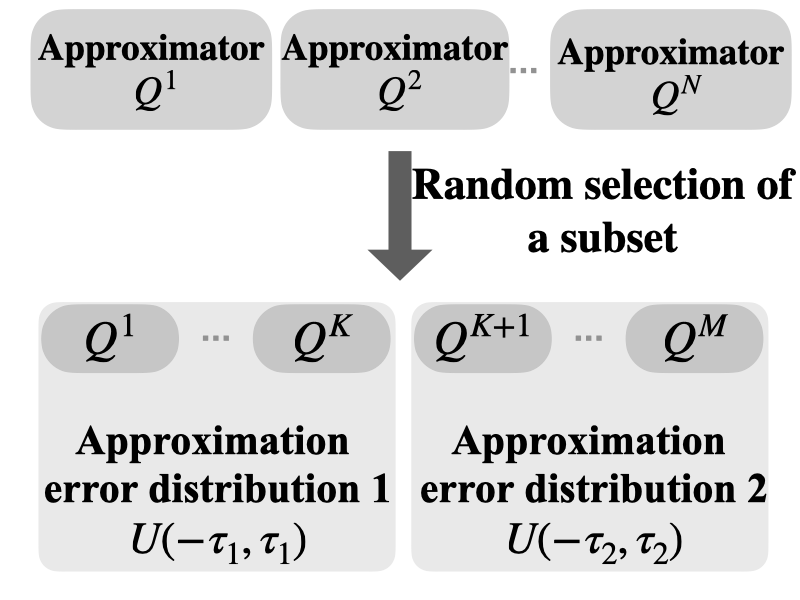}
         \caption{Q-function approximation error distributions.}
         \label{fig:theorem}
     \end{subfigure}
     \hfill
     \begin{subfigure}{0.3\textwidth}
         \centering
         \includegraphics[width=\textwidth]{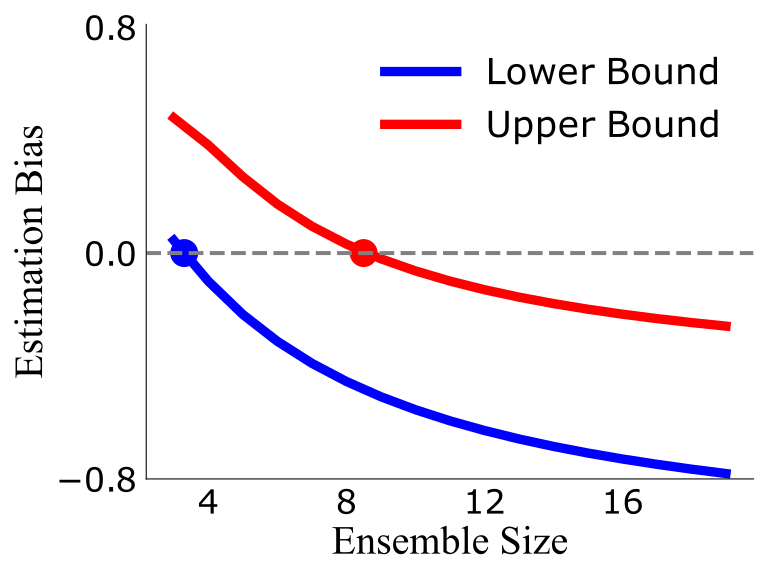}
         \caption{Lower bound and upper bound on Estimation bias.}
         \label{fig:2d}
     \end{subfigure}
     \hfill
     \begin{subfigure}{0.3\textwidth}
         \centering
         \includegraphics[width=\textwidth]{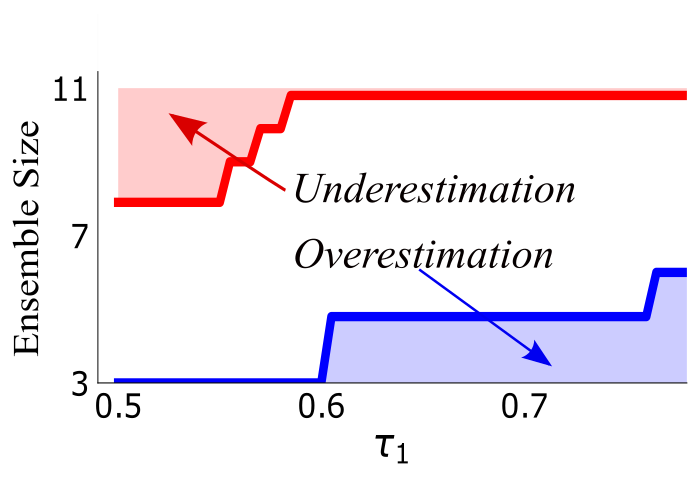}
         \caption{Impact of approximation error on the estimation bias: overestimation vs.~underestimation.}
         \label{fig:impact}
     \end{subfigure}
        \caption{Illustration of upper bounds and lower bounds on  estimation bias in Theorem \ref{thm:bound1}. (a) The case where the approximation errors of the Q-approximators can be categorized into two uniform distributions. (b) The lower bound and the upper bound corresponding to \eqref{eqn:lower} and \eqref{eqn:upper}, for given  $\tau_1$, $\tau_2$, $A$:  The blue point represents the 
       `critical' point where decreasing the ensemble size may lead overestimation (the lower bound is positive); and the red point denotes the `critical' point where increasing ensemble size may lead underestimation (the upper bound is negative). (c)  Due to time-varying feature of the approximation errors, the blue curve and the red curve depict the `critical' points for the lower bound and the upper bound, respectively.  }
        \label{fig:alltheorem}
\end{figure}

{\bf The case with two distributions for Q-function approximation errors.}  For ease of exposition, we first consider the case when the approximation errors follow one of the two uniform distributions, as illustrated in Figure \ref{fig:theorem}. 
Specifically, assume that  for $i \in \mathcal{K} \subset \mathcal{M} $ with   $|\mathcal{K}|=K$, $e^i(s,a) \sim \mathcal{U}(-\tau_1,\tau_1)$ , and for $i\in \mathcal{M}\setminus \mathcal{K}$, $e^i(s,a) \sim \mathcal{U}(-\tau_2,\tau_2)$. Without loss of generality, we assume that $\tau_1 > \tau_2 >0$. 
It is worth noting that  in \cite{thrun1993issues}\cite{lan2020maxmin}\cite{chen2021randomized},  the approximation error for all  approximators is assumed to follow the same uniform distribution, i.e., $\tau_1 = \tau_2 $, which is clearly more  restrictive than the case here with two  error distributions. For instance, when only one approximator is chosen to be updated at each step  \cite{lan2020maxmin}, the approximation error distribution of this approximator would change over time and hence differ from the others. We have the following results on the upper bound and lower bound of the estimation bias $\mathbf{E}[Z_{M}]$.

\begin{restatable}{theorem}{theorembound} 
For the case with two distributions for Q-function approximation errors, the estimation bias $\mathbf{E}[Z_{M}]$ satisfies that 
\begin{align}
       & \mathbf{E}[Z_{M}] \geq \gamma \left( \tau_1 (1-f_{AK}-2f_{AM}) + \tau_2 (1-f_{AM}) \right) ;  \label{eqn:lower}\\
     & \mathbf{E}[Z_{M}] \leq \gamma\left(\tau_1 +\tau_2(1- 2f_{A(M-K)}-(1-\beta_K)^A)\right), \label{eqn:upper}
\end{align}
where $\beta_K = (\frac{1}{2}-\frac{\tau_2}{2\tau_1})^K$, $f_{AK}=\frac{1}{K}B(\frac{1}{K},A+1)=\frac{\Gamma(A+1)\Gamma(1+\frac{1}{K})}{\Gamma(A+\frac{1}{K}+1)}=\frac{A(A-1)\cdots 1}{(A+\frac{1}{K})(A+\frac{1}{K}-1)\cdots(1+\frac{1}{K})}$ with $B(\cdot,\cdot)$ being the Beta function.
\label{thm:bound1}
\vspace{-0.05in}
\end{restatable}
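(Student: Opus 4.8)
The plan is to first reduce $Z_M$ to a pure order-statistics quantity, then compute and bound the expectation of a max-of-min of the approximation errors.

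\textbf{Step 1 (Reduction).} Following the setting of the illustrative example and the Thrun--Schwartz analysis, the ground-truth values at $s'$ are taken equal across the $A$ actions, so $\max_{a'} Q^{\pi}(s',a')$ is a constant $v$ and $\min_{i\in\mathcal{M}} Q^i(s',a') = v + \min_{i\in\mathcal{M}} e^i(s',a')$. The reward and the $v$ terms in the definition of $Z_M$ cancel, leaving
\begin{equation*}
  \mathbf{E}[Z_M] = \gamma\, \mathbf{E}\!\left[\max_{a'\in\mathcal{A}}\ \min_{i\in\mathcal{M}} e^i(s',a')\right] = \gamma\, \mathbf{E}\!\left[\max_{a'=1}^{A} W_{a'}\right],
\end{equation*}
where $W_{a'} := \min_{i\in\mathcal{M}} e^i(s',a')$ are i.i.d.\ across the $A$ actions, and each $W_{a'}$ is the minimum of $K$ errors distributed as $\mathcal{U}(-\tau_1,\tau_1)$ and $M-K$ errors distributed as $\mathcal{U}(-\tau_2,\tau_2)$.

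\textbf{Step 2 (Law of $W$).} Using independence I would write the survival function of $W$ as a product over the two groups. Writing $p(x)=\frac{\tau_1-x}{2\tau_1}$ and $q(x)=\frac{\tau_2-x}{2\tau_2}$, this gives
\begin{equation*}
  S(x) = \mathbf{P}(W>x) = \begin{cases} p(x)^K\, q(x)^{M-K}, & x\in[-\tau_2,\tau_2],\\ p(x)^K, & x\in[-\tau_1,-\tau_2),\\ 0, & x>\tau_2,\end{cases}
\end{equation*}
since for $x>\tau_2$ the $\tau_2$-errors force the minimum below $x$, while for $x<-\tau_2$ their survival factor is $1$. The expectation of the max of $A$ i.i.d.\ copies is then $\mathbf{E}[\max_{a'} W_{a'}]=\int_0^{\tau_2}\!\big(1-(1-S(x))^A\big)\,dx-\int_{-\tau_1}^{0}(1-S(x))^A\,dx$.

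\textbf{Step 3 (Beta-function identity and decoupling).} The building block is the identity $\int_0^1 (1-u^m)^A\,du = \frac1m B(\tfrac1m,A+1)=f_{Am}$, obtained by the substitution $t=u^m$; this is exactly where $f_{AK}$, $f_{AM}$ and $f_{A(M-K)}$ enter, and one checks that with a single distribution ($\tau_1=\tau_2$) the exact value collapses to $\tau(1-2f_{AM})$, recovering the classical bias as the $M=1$ special case. To bound the coupled term $p^K q^{M-K}$ I would use monotonicity of $p,q$: on $[0,\tau_2]$ one has $q\le p$, hence $q^M\le S\le p^M$, and on $[-\tau_2,0]$ one has $q\ge p$, hence $p^M\le S\le q^M$; combined with Bernoulli's inequality $(1-s)^A\ge 1-As$ and convexity of $s\mapsto 1-(1-s)^A$, the coupled integral splits into separate $f$-terms, yielding the lower bound \eqref{eqn:lower}. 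For the upper bound \eqref{eqn:upper}, the region split at $x=-\tau_2$ produces the boundary term: note $\beta_K=(\tfrac12-\tfrac{\tau_2}{2\tau_1})^K=p(\tau_2)^K$ is precisely the $\tau_1$-group survival factor at the right endpoint of the support, so $(1-\beta_K)^A$ accounts for the CDF contribution of the max at that boundary, while $f_{A(M-K)}$ captures the $\tau_2$-group minimum over $[-\tau_2,\tau_2]$.

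\textbf{Main obstacle.} The crux is the coupling of the two uniform distributions inside the product $p^K q^{M-K}$ together with the piecewise support split at $\pm\tau_2$: the exact integral has no clean closed form, so the difficulty is to pick inequalities that simultaneously decouple $p$ and $q$ into the separate Beta-type integrals $f_{AK}, f_{AM}, f_{A(M-K)}$ and correctly charge the endpoint mass at $x=\tau_2$ through $\beta_K$, while staying tight enough that both bounds reduce to the exact value $\gamma\tau(1-2f_{AM})$ when $\tau_1=\tau_2$. Handling the subtracted integral $\int_{-\tau_1}^0(1-S)^A\,dx$, whose integrand mixes both $p$ and the region boundary, is the most delicate part; the lower bound needs an upper estimate of this term and the upper bound a lower estimate, which is where the asymmetry between \eqref{eqn:lower} and \eqref{eqn:upper} originates.
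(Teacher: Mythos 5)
Your skeleton coincides with the paper's: reduce $\mathbf{E}[Z_M]$ to $\gamma\,\mathbf{E}[\max_{a'}\min_{i\in\mathcal{M}}e^i(s',a')]$, describe the law of the min piecewise with break points at $\pm\tau_2$ (your survival function $S$ is exactly $1-F_{1:M}$ from the order-statistics lemma the paper invokes), express the expectation of the max of $A$ i.i.d.\ copies through the CDF $(1-S)^A$ (equivalent to the paper's $\int Axf_{1:M}F_{1:M}^{A-1}dx$ after integration by parts), and convert everything to Beta constants via $\int_0^1(1-u^m)^A du=f_{Am}$. The decoupling inequalities you state, $q^M\le S\le p^M$ on $[0,\tau_2]$ and $p^M\le S\le q^M$ on $[-\tau_2,0]$, together with $p^Kq^{M-K}\le q^{M-K}$ (the source of $f_{A(M-K)}$ in \eqref{eqn:upper}), are precisely the devices the paper uses. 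Steps 1 and 2 are correct.

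The gap is in Step 3, which is a description of intent rather than a derivation, and the tools you name there would not produce the stated constants. Bernoulli's inequality $(1-s)^A\ge 1-As$ and convexity of $s\mapsto 1-(1-s)^A$ yield terms linear in $A$, not the quantities $f_{AK},f_{AM},f_{A(M-K)}$; in the paper these come from integrating $\int x\,d\bigl(F^A\bigr)$ by parts on each piece, substituting $y=\tfrac12-\tfrac{x}{2\tau_1}$ or $y=\tfrac12-\tfrac{x}{2\tau_2}$, and then enlarging or shrinking the resulting $y$-range to $[0,1]$. You also never actually handle the piece $[-\tau_1,-\tau_2)$, where $S=p^K$ and only the $K$ wide-error approximators are active: the paper must rewrite $\int_{-\tau_1}^{-\tau_2}$ as $\int_{-\tau_1}^{\tau_1}-\int_{-\tau_2}^{\tau_2}-\int_{\tau_2}^{\tau_1}$ of the $K$-only integrand, and it is the boundary evaluation $F_{1:K}(\tau_2)^A=(1-\beta_K)^A$ of that rewritten piece --- not an endpoint contribution of the full $M$-fold minimum at $x=\tau_2$ --- that supplies both the $\tau_1 f_{AK}$ term in \eqref{eqn:lower} and the $(1-\beta_K)^A$ term in \eqref{eqn:upper}. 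Finally, the consistency check you propose does not hold and cannot be used to certify the computation: at $\tau_1=\tau_2=\tau$ the stated bounds become $\gamma\tau(2-f_{AK}-3f_{AM})$ and $\gamma\tau(1-2f_{A(M-K)})$, neither of which equals the exact single-distribution value $\gamma\tau(1-2f_{AM})$ (indeed, insisting on that collapse would make you reject the paper's own bounds, and it also exposes a mismatch between the $\tau_2$-coefficient in the theorem statement, $1-f_{AM}$, and the one the appendix actually derives, $1-2f_{AM}$). To complete the proof you need to carry out the piecewise integrations explicitly and track the boundary terms; the high-level plan alone does not determine the coefficients.
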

The proof of Theorem \ref{thm:bound1} is relegated to  the Appendix A.1. Theorem \ref{thm:bound1} reveals that the estimation bias depends on  the   ensemble size as well as the approximation error distributions. To get a more concrete sense of Theorem \ref{thm:bound1},
we consider an example where $\tau_1 = 0.5$ and $\tau_2 = 0.4$, as depicted in Figure \ref{fig:2d}, and characterize the relationship between the estimation bias and the ensemble size $M$. Notably, the estimation bias turns negative when the ensemble size $M>M_u=9$ (red point: the value of $M$ where the upper bound is 0) and becomes positive when $M<M_l=4$ (blue point: the value of $M$ where the lower bound is 0). In Figure \ref{fig:impact}, we fix $\tau_2 = 0.4$ and show how those two critical points ($M_u$ and $M_l$) change along with $\tau_1$. Here the red shaded area indicates underestimation bias when $M>M_u$, and the blue shaded area indicates overestimation bias when $M<M_l$.
Clearly, in order to avoid the positive bias (blue shaded area), it is desirable to increase the ensemble size when the approximation error is large, e.g., $\tau_1 > 0.6$. On the other hand, decreasing the ensemble size is more preferred to avoid underestimation (red shaded area) when the approximation error is small, e.g., $\tau_1 < 0.6$.


{\bf The general case with heterogeneous   distributions for Q-function approximation errors.} Next, we consider a  general case, in which the approximation errors for different approximators $\{Q^i\}$ are independently but non-identically distributed. Specifically, we 
assume that the approximation error $e^i(s,a)$ for $Q^i(s,a)$, $i=1,2,\cdots, M$, follows the uniform distribution  $\mathcal{U}(-\tau_i,\tau_i)$, where $\tau_i >0$. 
We use a multitude of tools to devise the upper bound and lower bound on the estimation bias $\mathbf{E}[Z_{M}]$. As expected, this general case is technically more challenging and the bounds would be not as sharp as in the special case with two distributions. 
\begin{restatable}{theorem}{generalbound}\label{thm:expectation}  For the general case with heterogeneous error distributions,  the estimation bias $\mathbf{E}[Z_{M}]$ satisfies that
\begin{align}
 \mathbf{E}[Z_{M}] \geq& \gamma\left(\tau_{\min} - \tau_{\max}(f_{A(M-1)} + 2f_{AM}) \right); \label{eqn:lowerbound2}\\
 \mathbf{E}[Z_{M}] \leq& \gamma\left(2\tau_{\min} - \tau_{\max}\left( f_{AM} - 2g_{AM}\right)\right), \label{eqn:upperbound2}
\end{align}
where $\tau_{\min} = \min_{i}  \tau_i$ and $\tau_{\max} = \max_{i} \tau_i $. $g_{AM} = \frac{1}{M}I_{0.5}(\frac{1}{M},A+1)$ with $I_{0.5}(\cdot, \cdot)$ being the regularized incomplete Beta function.
\vspace{-0.04in}
\end{restatable}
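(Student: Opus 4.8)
The plan is to follow the Thrun--Schwartz reduction already used for Theorem \ref{thm:bound1}. First I would invoke the equal-true-value assumption (as in the illustrative example, where $Q^{*}(s,\cdot)$ does not depend on the action), so that $\max_{a'}Q^{\pi}(s',a')=V$ is constant in $a'$; the $r$ and $V$ terms cancel in $Z_M$ and leave $\mathbf{E}[Z_M]=\gamma\,\mathbf{E}[\max_{a}\min_{i}e^{i}_{a}]$ with $e^{i}_{a}\sim\mathcal{U}(-\tau_i,\tau_i)$ mutually independent across $i$ and $a$. This turns the claim into an order-statistics estimate. Writing $W_a=\min_{i=1}^{M}e^i_a$, the errors across the $A$ actions are i.i.d., so $\mathbf{E}[\max_a W_a]=\int_{0}^{\tau_{\min}}\!\big(1-F_W(x)^A\big)\,dx-\int_{-\tau_{\max}}^{0}\!F_W(x)^A\,dx$, where the survival function factorizes as $\bar F_W(x)=\prod_{i=1}^{M}\frac{\tau_i-x}{2\tau_i}$ on the relevant range and $W_a$ is supported on $[-\tau_{\max},\tau_{\min}]$.

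The crux is that $\bar F_W$ has no closed form under heterogeneous $\tau_i$. The key observation I would exploit is that each single-error survival $s_i(x)=\frac{\tau_i-x}{2\tau_i}$ is increasing in $\tau_i$ for $x>0$ but decreasing for $x<0$, with the crossover exactly at $x=0$. This motivates splitting the integral at $x=0$ and sandwiching the product between the homogeneous survival functions $\big(\frac{\tau_{\min}-x}{2\tau_{\min}}\big)^M$ and $\big(\frac{\tau_{\max}-x}{2\tau_{\max}}\big)^M$. Concretely, for the lower bound I would use $\tau_{\min}$ on the positive part and $\tau_{\max}$ on the negative part to produce a variable $W_{\mathrm{low}}\preceq W$ stochastically; for the upper bound the roles swap. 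Since $t\mapsto 1-(1-t)^A$ is monotone, these stochastic comparisons pass through both the max-over-actions and the integral.

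It then remains to evaluate the homogeneous integrals. Substituting $y=\frac{\tau-x}{2\tau}$ turns each piece into $\int(1-y^M)^A\,dy$, and a further substitution $u=y^M$ produces Beta integrals; the complete Beta over $[0,1]$ reproduces the homogeneous constant $f_{AM}$ (in the fully homogeneous case the same computation in fact gives the exact value $\gamma\tau(1-2f_{AM})$), while the split boundary $x=0$ corresponds to single-error survival $s_i=\tfrac12$, which is what yields the regularized incomplete Beta evaluated at $0.5$, i.e.\ the term $g_{AM}$ in \eqref{eqn:upperbound2}. To collapse the $M$-fold product of heterogeneous factors into a clean single-parameter expression I would peel off one approximator (a leave-one-out bound on $\min_i e^i_a$), reducing the ensemble to $M-1$; this is the source of the $f_{A(M-1)}$ term in \eqref{eqn:lowerbound2}. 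Replacing the remaining $\tau_{\min}$/$\tau_{\max}$ factors by their extremes then gives the stated closed forms.

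The main obstacle I anticipate is precisely this heterogeneity-induced loss of a closed form: unlike the two-distribution case of Theorem \ref{thm:bound1}, the product $\prod_i\frac{\tau_i-x}{2\tau_i}$ cannot be integrated directly, so every bound must come from a monotone relaxation whose inequality direction stays consistent on both sides of the sign change at $x=0$ and over the mismatched support $[-\tau_{\max},\tau_{\min}]$. Keeping these directions aligned while still landing on expressions writable through $f_{AM}$, $f_{A(M-1)}$ and $g_{AM}$ is the delicate part, and it is also why the resulting bounds are necessarily looser than in the two-distribution setting, consistent with the remark preceding the statement.
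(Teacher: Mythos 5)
Your plan is essentially the paper's own proof: it likewise reduces the bias to $\gamma\,\mathbf{E}[\max_{a}\min_{i}e^{i}]$, rewrites this as an integral of $F_{1:M}(x)^{A}$ with $F_{1:M}=1-\prod_{i}(1-F_{i})$, splits at $x=0$ to exploit the sign-dependent monotonicity of each survival factor in $\tau_i$ so as to sandwich the product by homogeneous uniforms at $\tau_{\min}$ and $\tau_{\max}$, and then evaluates the resulting $\int(1-y^{M})^{A}dy$ pieces as (incomplete) Beta integrals, with the $x=0$ boundary giving $g_{AM}$ and the region $[-\tau_{\max},-\tau_{\min}]$ (where one survival factor is identically $1$) giving $f_{A(M-1)}$. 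The only details you leave implicit --- e.g.\ that on $[-\tau_{\max},-\tau_{\min}]$ the $\tau_{\min}$-based relaxation is vacuous and one must fall back on a single-factor bound --- are handled in the paper exactly along the lines you anticipate, so the proposal is correct and matches the paper's route.
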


Observe from Theorem \ref{thm:expectation} that the lower bound in (\ref{eqn:lowerbound2}) is positive when $\tau_{\min} (1 - 2f_{AM}) > \tau_{\max}f_{A(M-1)} $, indicating the existence of the overestimation issue. On thew contrary, the upper bound in (\ref{eqn:upperbound2}) is negative when $2\tau_{\min} < \tau_{\max}\left( 1 + f_{AM} - 2g_{AM}\right)$, pointing to the underestimation issue. In general, when $\tau_{\min}$ is large enough, decreasing ensemble size $M$ is likely to cause overestimation, e.g., $\mathbf{E}[Z_{M}]\geq0$ when $M<2$. On the other hand, when  $\tau_{\max}$ is small enough, increasing ensemble size $M$ is likely to cause underestimation, e.g., $\mathbf{E}[Z_{M}]\leq 0$ when $M$ is sufficiently large. 

{{\bf{Determination of parameter $c$.}}
As illustrated in Figure \ref{fig:impact}, for given approximation error characterization,  a threshold $c$ can be chosen such that increasing the ensemble size would help to correct the overestimation bias when $\tau_{\max} > c$,  and  decreasing the ensemble size is more conductive to mitigate the underestimation bias when $\tau_{\max} < c$. Specifically, parameter $c$ is determined in two steps. \textit{Step 1: To estimate approximation error distribution parameters ${\tau}_{\min}$ and $\tau_{\max}$ by running an ensemble based algorithm (e.g., Algorithm \ref{alg:ada}) for a few epochs  with a fixed ensemble size.}   In particular,  a testing trajectory is generated from a random initial state using the current policy to compute the (discounted) MC return $Q^{\pi}$ and the estimated Q-function value $Q^i, i=1,2, \cdots, N$. We next fit a uniform distribution model $\mathcal{U}(-\tau_i,\tau_i)$ of the approximation error $(Q^i-Q^{\pi})$ for each Q-function approximator $Q^i$. Then,  ${\tau}_{\min}$ and ${\tau}_{\max}$ can be obtained by choosing the minimum and maximum values among $\tau_i, i =1, 2, \cdots, N$. \textit{Step 2: To obtain the upper bound and the lower bound in Theorem 2 by using $\{ \tau_{\min}, \tau_{\max}, A, \gamma\}$.} We investigate the relationship between ensemble size $M$ and the estimation bias  by studying the bounds and identifying the `critical' points as illustrated in Figure 4(b).  Observe that a `proper' ensemble size should be chosen between the `critical' points, so as to reduce the overestimation and underestimation bias as much as possible. Since the approximation error is time-varying during the learning process,   these two `critical' points vary  along with $\{\tau_{\max}\}$ and $\{\tau_{\min}\}$ (as shown in Figure 4(c)). Intuitively,  it is desirable to drive the system to avoid both the red region (underestimation) and the blue region (overestimation). It can be clearly observed that there is a wide range of choice for parameter $c$ (e.g., $[0.5,0.7]$ in Figure 4(c)) for the algorithm to stay in the white region, indicating that even though the pre-determined $c$ above is not optimized, it can still serve the purpose well.}

The proof  of Theorem \ref{thm:expectation} and  numerical illustration can be found in the Appendix A.3. Summarizing, both Theorem \ref{thm:bound1} and Theorem \ref{thm:expectation} indicate that the approximation error characterization plays a critical role in controlling the estimation bias. In fact, both the lower bound and the upper bound in Theorem \ref{thm:expectation} depends on $\tau_{\min}$ and $\tau_{\max}$, which are time-varying  due to the iterative nature of the learning process, indicating that it is sensible to use an adaptive  ensemble size to drive the estimation bias to be close to zero, as much as possible.



\subsection{Practical Implementation }\label{sec:algorithm}
Based on the theoretic findings above, we next propose AdaEQ that adapts  the ensemble size based on the approximation error feedback on the fly, so as to drive the estimation bias close to zero. Particularly, as summarized in Algorithm \ref{alg:ada}, AdaEQ introduces two important steps at each iteration $t$, i.e., approximation error characterization (line 3) and ensemble size adaptation (line 4), which can be combined with the framework of either   Q-learning or  actor-critic methods. 

{\bf Characterization of the time-varying approximation error.} As outlined in Algorithm \ref{alg:ada}, the first key step is to quantify the time-varying approximation error at each iteration $t$ (for ease of exposition, we omit the subscript $t$ when it is clear from the context). Along the same line as in \cite{fujimoto2018addressing,van2016deep,chen2021randomized}, we run a testing trajectory of length $H$, $\mathcal{T}=(s_0,a_0,s_1,a_1,\cdots, s_H,a_H)$, from a random initial state using the current policy $\pi$, and compute the discounted Monte Carlo return $Q^{\pi}(s,a)$  and the estimated Q-function value $Q^i(s,a)$, $i=1,\cdots, N$ for each visited state-action pair $(s,a)$. The empirical standard derivation of  $Q^i(s,a) - Q^{\pi}(s,a)$ can be then obtained to quantify the approximation error of each approximator $Q^i$.  Then, we take the average of the empirical standard derivation over all approximators to characterize the approximation error at the current iteration $t$, i.e.,
\begin{equation}\label{eqn:parameter}
\textstyle \Tilde{\tau}_t= \frac{1}{N}\sum_{i=1}^{N} \textrm{std}(Q^{i}(s,a) - Q^{\pi}(s,a)),~~(s,a)\in \mathcal{T}.
\end{equation}


{\bf Error-feedback based ensemble size adaptation.} 
Based on the theoretic results and Figure \ref{fig:impact}, 
we update the ensemble size $M$ at each iteration $t$ based on the approximation error \eqref{eqn:parameter}, using the following piecewise function: 
\begin{equation}\label{eqn:update}
  \textstyle  M_{t} = \begin{cases} 
      \mbox{rand}(M_{t-1}+1, N) & \tilde{\tau}_{t-1} > c,~M_{t-1}+1\leq N \\
      \mbox{rand}(2, M_{t-1}-1)  & \tilde{\tau}_{t-1} < c, ~ M_{t-1}-1 \geq 2\\
      M_{t-1} & \textrm{otherwise,} 
   \end{cases}
\end{equation}
where $\mbox{rand}(\cdot,\cdot)$ is a uniform random function and $c$ is a pre-determined parameter to capture the `tolerance' of the estimation bias during the adaptation process. Recall that parameter $c$ can be determined by using the upper bound and the lower bound in Theorem \ref{thm:expectation} (Theorem \ref{thm:bound1}). Particularly,  a larger $c$ implies that more tolerance of the underestimation bias is allowed when adapting the  ensemble size $M_t$. A smaller $c$, on the other hand, admits more tolerance of the overestimation. In this way, AdaEQ can be viewed as a generalization of Maxmin and REDQ with ensemble size adaptation. In particular, when $c=0$ and $M_t+1 \leq N$, the adaptation mechanism would increase the ensemble size until it is equal to $N$.  Consequently, AdaEQ degenerates to Maxmin \cite{lan2020maxmin}  where $M=N$, leading to possible underestimation bias. 
Meantime, when $c$ is set sufficiently large, the ensemble size $M$ would decrease until reaching the minimal value $2$ during the learning process, where the estimation bias would be positive according to Theorem \ref{thm:expectation}. 
In this case, AdaEQ is degenerated to REDQ \cite{chen2021randomized} with ensemble size $M=2$.  We show the convergence analysis of AdaEQ in Appendix A.5.

{
{\bf Remark.} We use random sampling in Eqn.~(\ref{eqn:update}) for two reasons. Firstly, the characterization of the approximation error in Eqn.~(\ref{eqn:parameter}) is noisy in nature. In particular, Monte Carlo returns with finite-length testing trajectory may introduce empirical errors when estimating the underlying ground true value of $Q^{\pi}$. This noisy estimation is often the case when the policy is not deterministic, or the environment is not deterministic. Thus, we use random sampling to `capture' the impact of this noisy estimation. Secondly, in general it is infeasible to characterize the exact relationship between estimation bias $Z_M$ and ensemble size $M$. Without any further prior information except from the bounds we obtained in Theorem 1 and Theorem 2 about the approximation error, the random sampling can be viewed as the `exploration' in AdaEQ. 
}




\begin{algorithm}[t]
\caption{Adaptive Ensemble Q-learning (AdaEQ)}
 \begin{algorithmic}[1]
 \State Empty replay buffer $\mathcal{D}$, step size $\alpha$, number of the approximators $N$, initial in-target ensemble size $M_0 \leq N$, initial state $s$. Initialize $N$ approximators with different training samples.
 \For{Iteration $t = 1,2,3,\cdots$}
 \State {\color{black}Identify approximation error parameter {\color{red}$\tilde{\tau}_t$} using \eqref{eqn:parameter}}
 \State {\color{black}Update ensemble size {\color{red}$M_t$} according to \eqref{eqn:update}}
 \State {\color{black}Sample a set $\mathcal{M}$ of  {\color{red}$M_t$} different indices from $\{1,2,\cdots, N\}$}
 \State Obtain the proxy approximator $Q^{\textrm{proxy}}(s,a) \leftarrow \min_{i\in \mathcal{M}}Q^{i}(s,a)$, $\forall a\in \mathcal{A}$
 \State  Choose action $a$ from current state $s$ using policy derived from $Q^{\textrm{proxy}}$ (e.g., $\varepsilon$-greedy)
 \State Take action $a$, observe $r$ and next state $s'$
 \State Update replay buffer $\mathcal{D} \leftarrow \mathcal{D}\cup\{s,a,r,s'\}$
 \For{$i=1,2,\cdots, N$}
  \State Sample a random mini-batch $B$ from $\mathcal{D}$
  \State Compute the target: $y(s,a,r,s') \leftarrow r+\gamma \max_{a'\in \mathcal{A}}Q^{\textrm{proxy}}(s',a')$, $(s,a,r,s') \in B$
  \State Update Q-function $Q^i$: $Q^i(s,a) \leftarrow (1-\alpha)Q^{i}(s,a) + \alpha y(s,a,r,s') $, $(s,a,r,s') \in B$
 \EndFor
 \State $s \leftarrow s'$
 \EndFor 
 \end{algorithmic}\label{alg:ada}
  \vspace{-0.04in}
\end{algorithm}


\section{Experimental Results}\label{sec:experiment}

In this section,  we evaluate the effectiveness of AdaEQ by answering the following questions: 
1) Can AdaEQ minimize the estimation bias and further improve the performance in comparison to existing ensemble methods? 
2) How does AdaEQ perform given different initial ensemble sizes?  
3) How does the `tolerance' parameter $c$ affect the performance?

To make a fair comparison, we follow the setup of \cite{chen2021randomized} and use the same code base to compare the performance of AdaEQ with REDQ \cite{chen2021randomized} and Average-DQN (AVG) \cite{anschel2017averaged}, on three MuJoCo continuous control tasks: Hopper, Ant and Walker2d. The same hyperparameters are used for all the algorithms. Specifically, we consider $N=10$ Q-function approximators in total. The ensemble size $M=N=10$ for AVG, while the initial $M$ for AdaEQ is set as $4$. The ensemble size for REDQ is set as $M=2$, which is the fine-tuned result from \cite{chen2021randomized}. For all the experiments, we set the `tolerance' parameter $c$ in  \eqref{eqn:update}  as $0.3$ and the length of the testing trajectories as $H=500$. The ensemble size is updated according to \eqref{eqn:update} every 10 epochs in AdaEQ. The discount factor is $0.99$. 
Implementation details and hyperparamter settings are fully described in Appendix B.1. 

\begin{figure}[t]
\centering
\begin{subfigure}{0.24\textwidth}
         \centering
         \includegraphics[width=\textwidth]{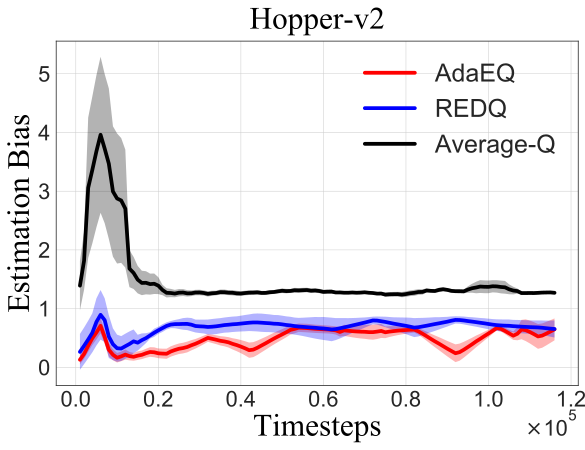}
         \label{fig:curve1}
     \end{subfigure}
     \begin{subfigure}{0.24\textwidth}
         \centering
         \includegraphics[width=\textwidth]{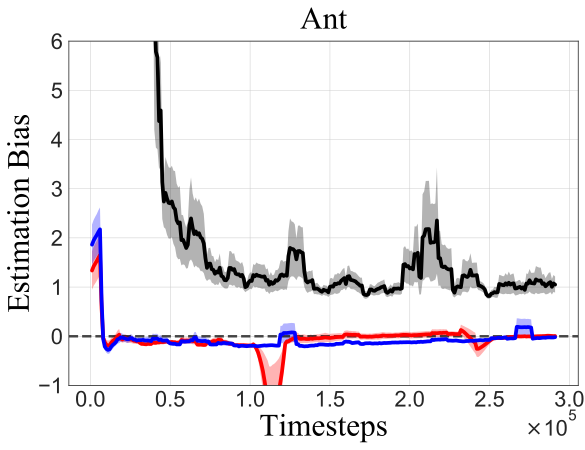}
         \label{fig:curve2}
     \end{subfigure}
     \begin{subfigure}{0.24\textwidth}
         \centering
         \includegraphics[width=\textwidth]{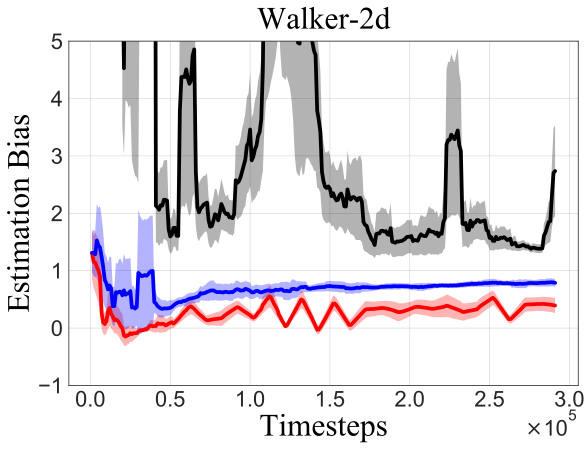}
         \label{fig:curve3}
     \end{subfigure}
          \begin{subfigure}{0.24\textwidth}
         \centering
         \includegraphics[width=\textwidth]{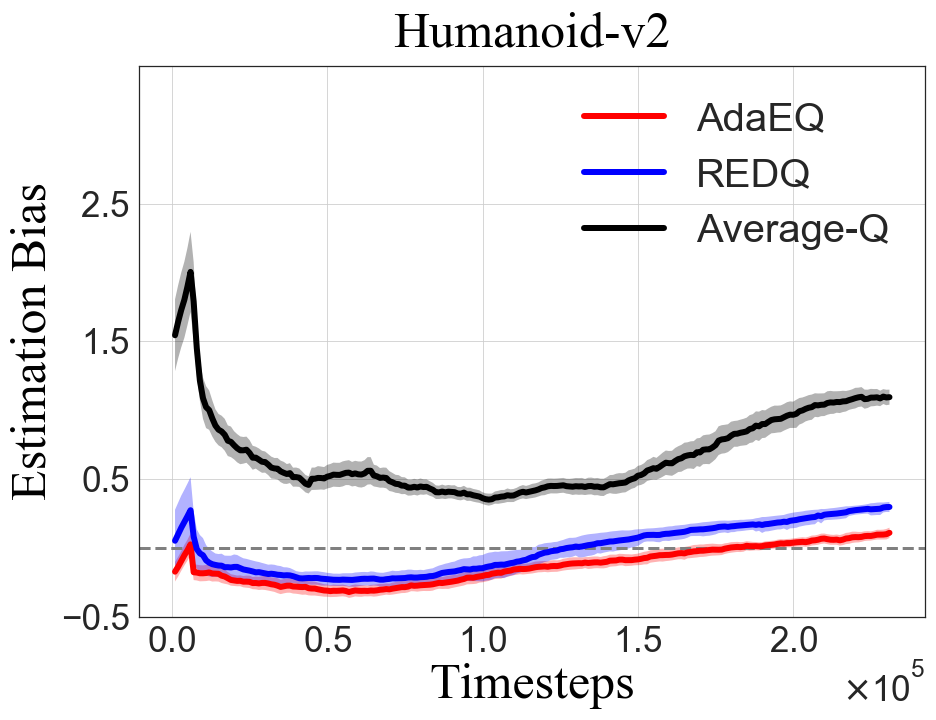}
         \label{fig:curve4}
     \end{subfigure}
     
     \begin{subfigure}{0.24\textwidth}
         \centering
         \includegraphics[width=\textwidth]{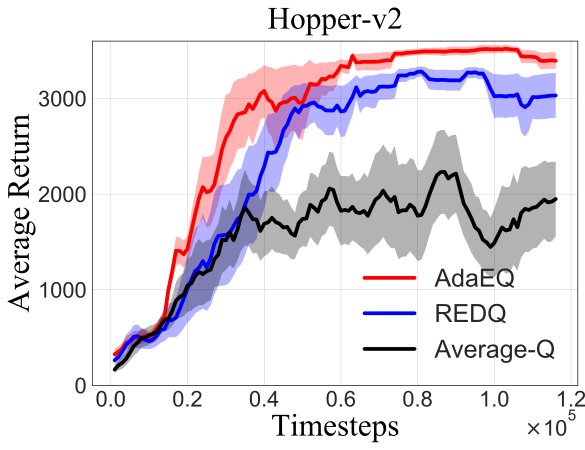}
         \label{fig:hoppererrro}
     \end{subfigure}
     \begin{subfigure}{0.24\textwidth}
         \centering
         \includegraphics[width=\textwidth]{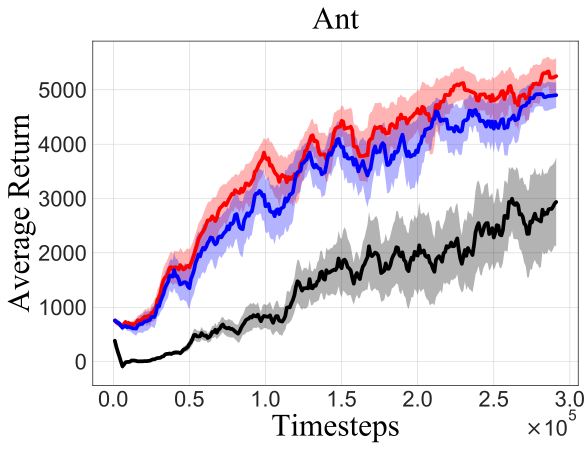}
         \label{fig:anterror}
     \end{subfigure}
     \begin{subfigure}{0.24\textwidth}
         \centering
         \includegraphics[width=\textwidth]{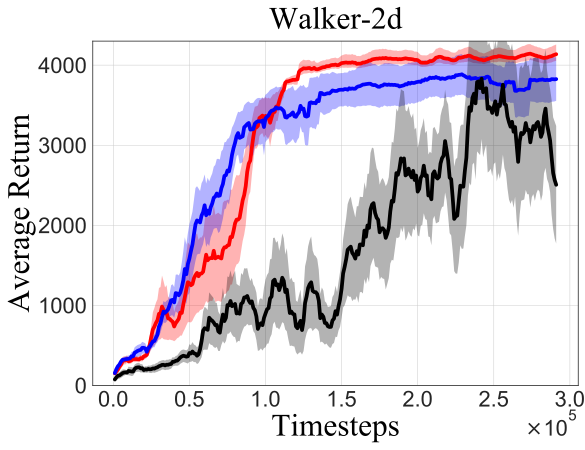}
         \label{fig:walkererror}
     \end{subfigure}
          \begin{subfigure}{0.24\textwidth}
         \centering
         \includegraphics[width=\textwidth]{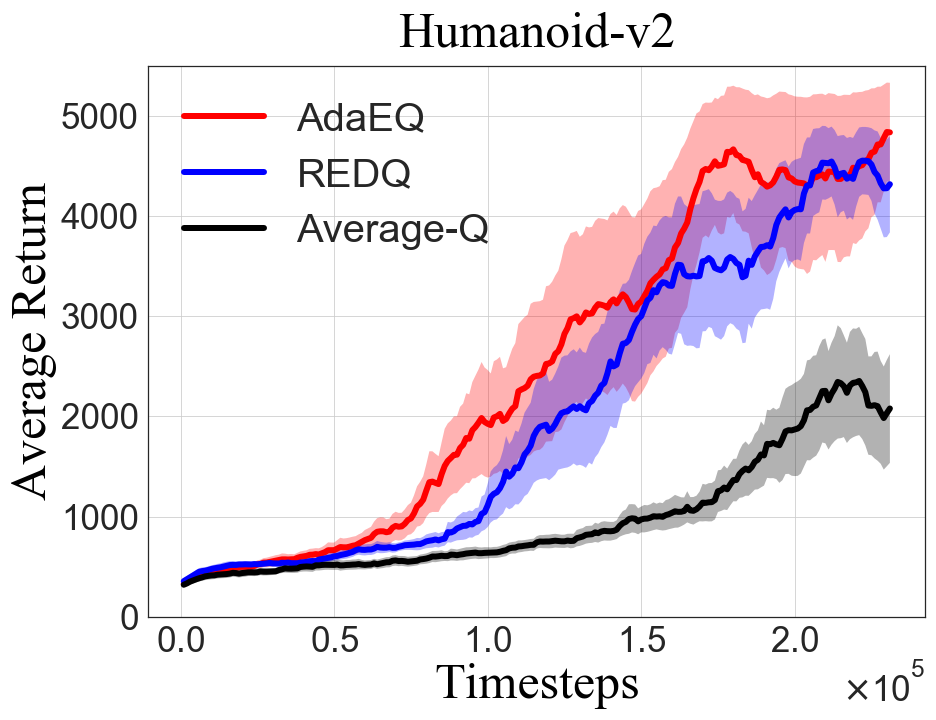}
         \label{fig:humanerror}
     \end{subfigure}
     \caption{Comparison in terms of the estimation bias (first row) and average return (second row) in three MuJoCo tasks. Solid lines are the mean values and the shaded areas are the standard derivations across three random seeds. We use the undiscounted sum of all the reward in the testing episode to evaluate the performance of the current policy after each epoch. The estimation error is evaluated by comparing the difference between the Monte Carlo return and the average Q-value for each state-action pair visited during the testing episode. We take the average of those error values as estimation bias.  For AdaEQ, we use the same hyperparameter $c$ for ensemble size adaptation. }
         \label{fig:error}
\end{figure}

{\bf{Evaluation of estimation bias.}} To investigate the impact of the adaptation mechanism in AdaEQ, we begin by examining how the estimation bias changes in the training process.  After each epoch, we run an evaluation episode of length $H=500$, starting from an initial state sampled from the replay buffer. We calculate the estimation error based on the difference between the Monte Carlo return value and the Q-estimates as in \cite{van2016deep, chen2021randomized, janner2019trust}. For each experiment, the shaded area represents a standard deviation of the average evaluation over 3 training seeds. As shown in the first row of Figure \ref{fig:error}, AdaEQ  can reduce the estimation bias to nearly zero in all three benchmark environments, in contrast to REDQ and AVG. 
The AVG approach tends to result in positive bias in all three environments during the learning procedure, which is consistent with the results obtained in \cite{chen2021randomized}. Notably, it can be clearly observed from  Hopper and Walker2d tasks that the estimation bias  for AdaEQ is driven to be close to zero, thanks to the dynamic ensemble size adjustment during the learning process. Meantime, in the Ant task, even though the fine-tuned REDQ can mitigate the overestimation bias, it tends to have underestimation bias, whereas AdaEQ is able to keep the bias closer to zero (gray dashed line) even under a `non-optimal' choice of the initial ensemble size.
\begin{figure}[t!]
\centering
     \begin{subfigure}{0.24\textwidth}
         \centering
         \includegraphics[width=\textwidth]{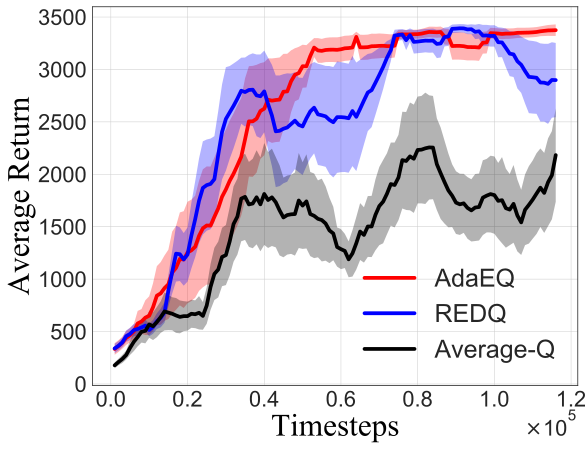}
         \caption{Hopper-v2 task. Average returns over different initial ensemble size $M=2,3,5$.}
         \label{fig:curveM2}
     \end{subfigure}
     \hfill
     \begin{subfigure}{0.24\textwidth}
         \centering
         \includegraphics[width=\textwidth]{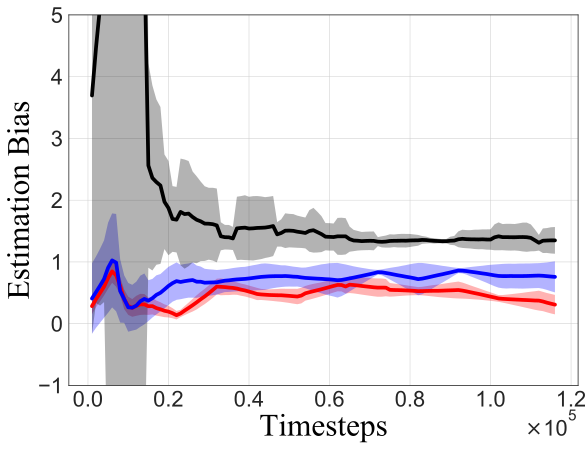}
        \caption{Hopper-v2 task. Estimation bias over different initial ensemble size $M=2,3,5$.}
         \label{fig:curveM5}
     \end{subfigure}
     \hfill
    \begin{subfigure}{0.24\textwidth}
         \centering
         \includegraphics[width=\textwidth]{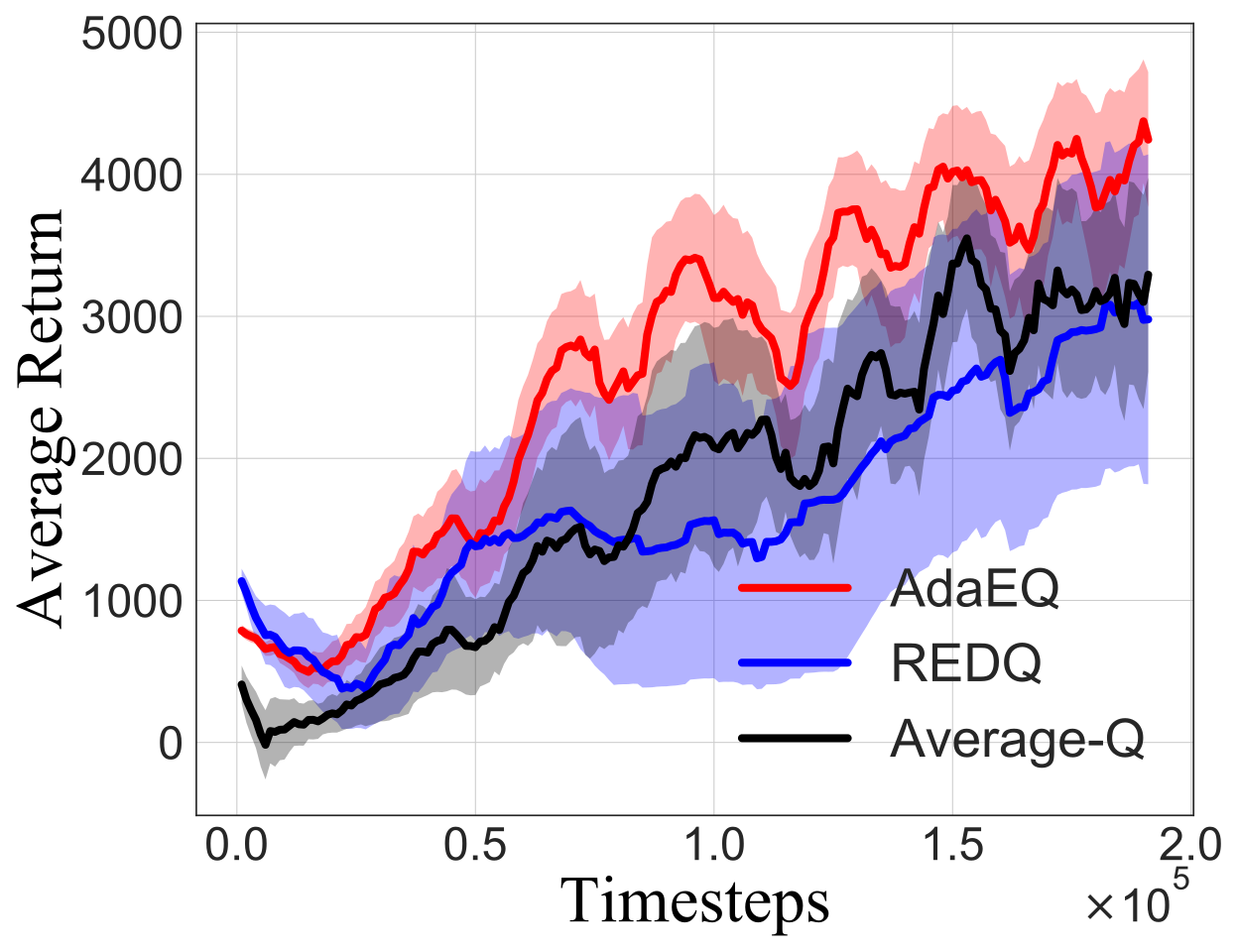}
         \caption{Ant task. Average returns over different initial ensemble size $M=3,5,7$.}
         \label{fig:ablationcurve}
     \end{subfigure}
     \hfill
     \begin{subfigure}{0.24\textwidth}
         \centering
         \includegraphics[width=\textwidth]{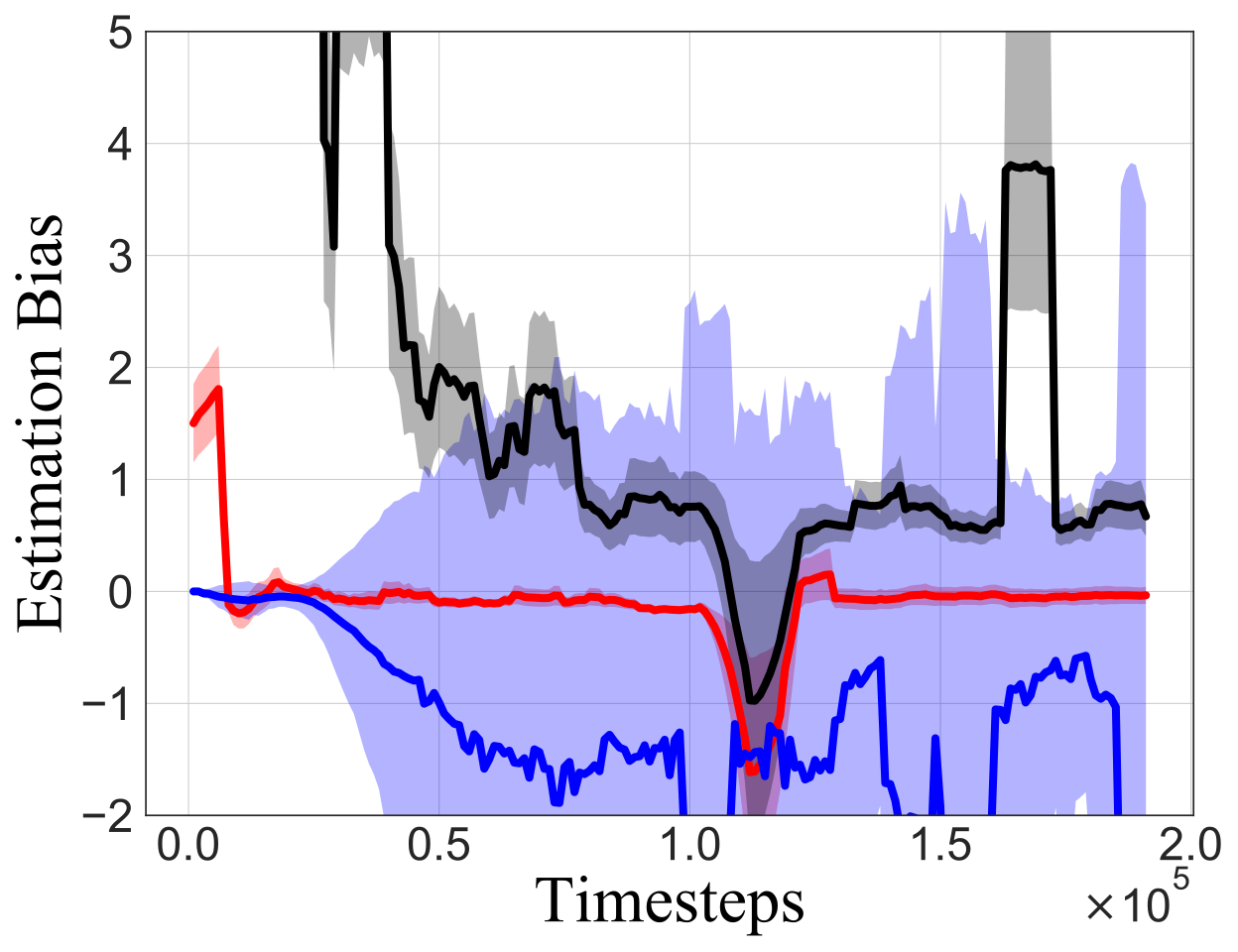}
         \caption{Ant task. Estimation bias over different initial ensemble size $M=3,5,7$.}
         \label{fig:ablationbias}
     \end{subfigure}
     \caption{Impacts of the initial ensemble size $M$ on the performance of AdaEQ in Hopper-v2 and Ant task.  The solid lines are the mean values and the shaded areas are the standard derivations across three ensemble size settings.}
         \label{fig:robustM}
\end{figure}

\begin{figure}[t!]
\centering
    \begin{subfigure}{0.24\textwidth}
         \centering
         \includegraphics[width=\textwidth]{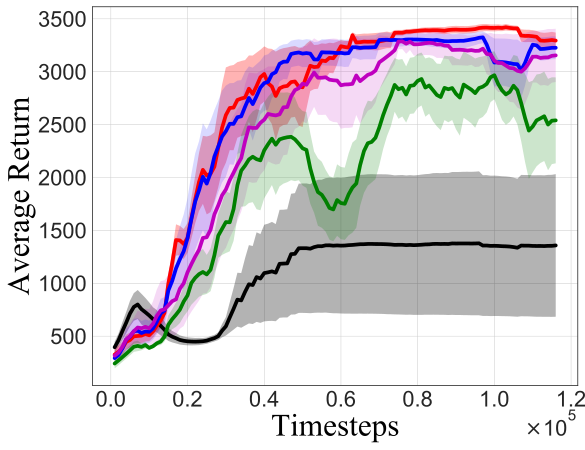}
         \caption{Hopper-v2 task. Average returns over different parameter $c$ in AdaEQ.}
         \label{fig:ablationcurve}
     \end{subfigure}
     \hfill
     \begin{subfigure}{0.24\textwidth}
         \centering
         \includegraphics[width=\textwidth]{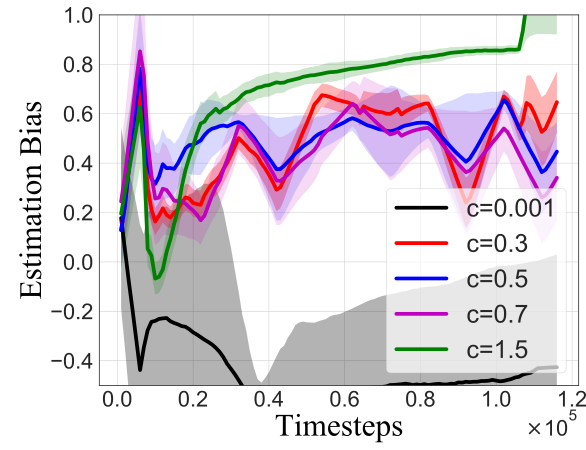}
         \caption{Hopper-v2 task. Estimation bias over different parameter $c$ in AdaEQ.}
         \label{fig:ablationbias}
     \end{subfigure}
     \hfill
         \begin{subfigure}{0.24\textwidth}
         \centering
         \includegraphics[width=\textwidth]{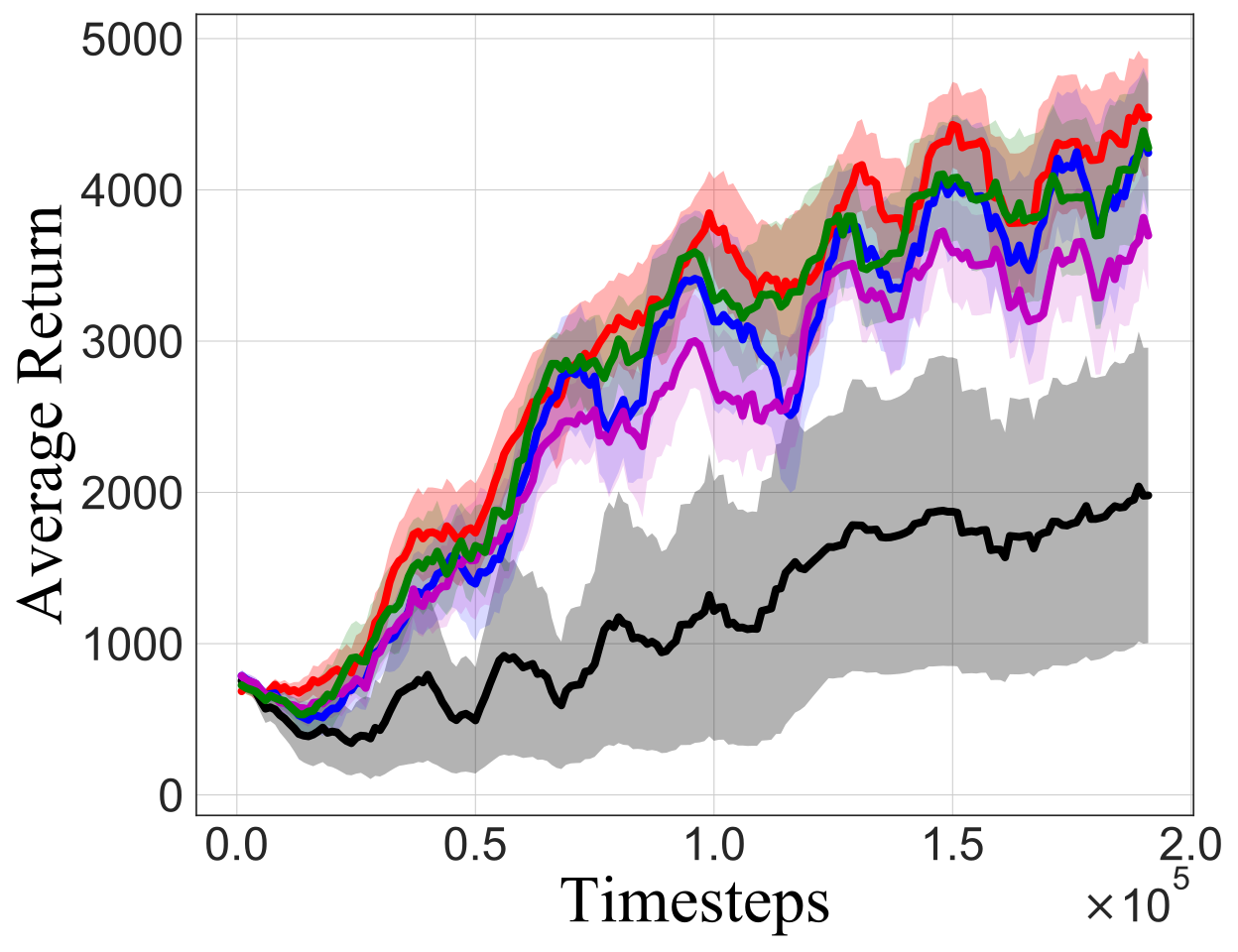}
         \caption{Ant task. Average returns over different parameter $c$ in AdaEQ.}
         \label{fig:ablationcurveAnt}
     \end{subfigure}
     \hfill
     \begin{subfigure}{0.24\textwidth}
         \centering
         \includegraphics[width=\textwidth]{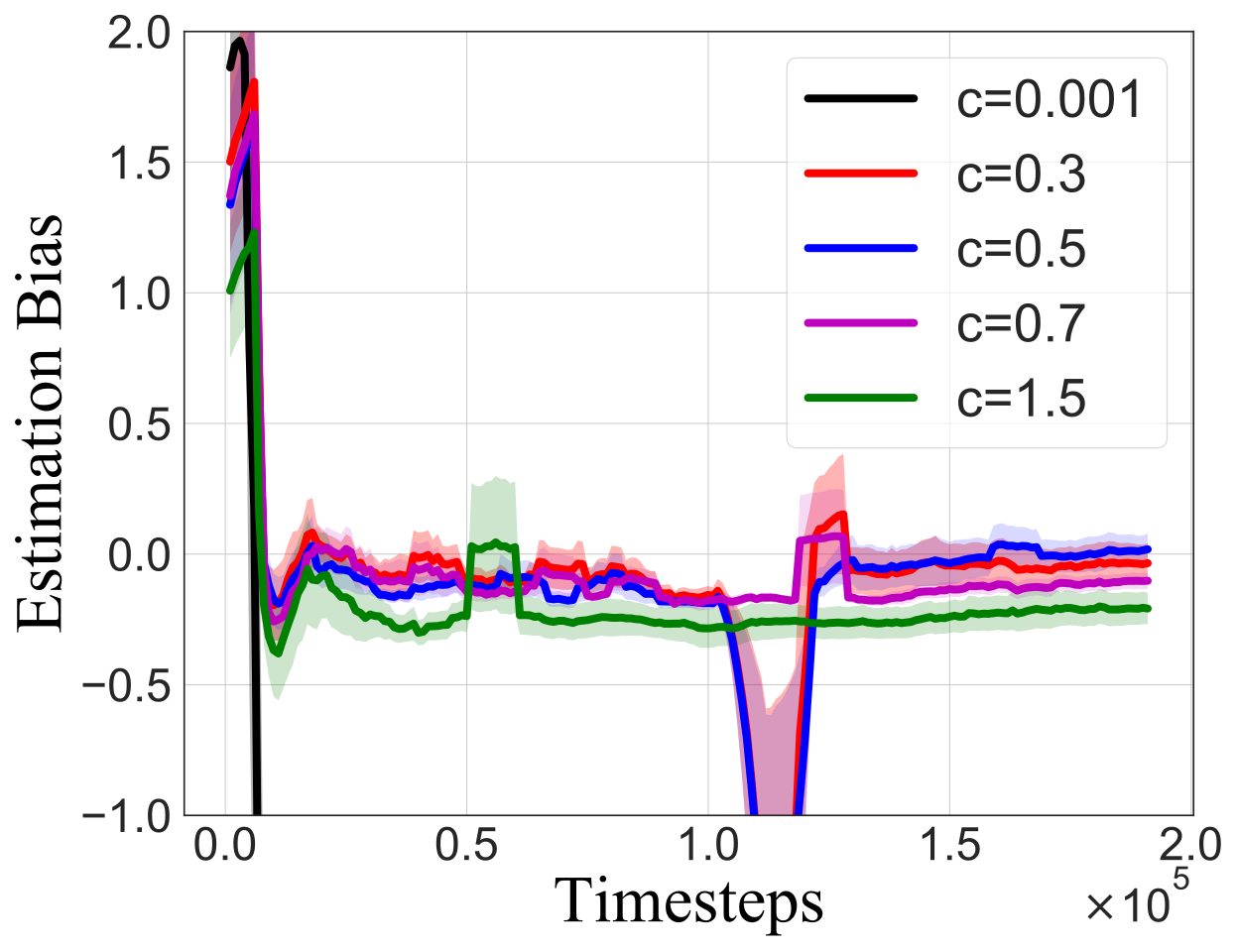}
         \caption{Ant task. Estimation bias over different parameter $c$ in AdaEQ.}
         \label{fig:ablationbiasAnt}
     \end{subfigure}
     \caption{ Impacts of parameter $c$ on the performance of AdaEQ in Hopper-v2 and Ant task.  The initial ensemble size is set to be $M=4$. The mean value and the standard derivation are evaluated across three training seeds.}
         \label{fig:robustc}
\end{figure}

{\bf{Performance on MuJoCo benchmark.}}
We evaluate the policy return after each epoch by calculating the undiscounted sum of rewards when running the current learnt policy \cite{chen2021randomized, janner2019trust}. The second row of 
Figure \ref{fig:error} demonstrates the average return during the learning process for AdaEQ, AVG and REDQ, respectively. 
Especially, we choose the fine-tune ensemble size for REDQ \cite{chen2021randomized}. As observed in Figure  \ref{fig:error}, AdaEQ can efficiently learn a better policy and achieve higher average return in all three challenging MuJoCo tasks, without searching the optimal parameters beforehand for each of them. { Meantime, AdaEQ only incurs slightly more computation time than REDQ in most MuJoCo tasks. Due to space limitation, we have relegated the wall-clock training time comparison to Table~2 in Appendix~B.2.}

{{\bf{Robustness to the initial ensemble size.}} Next, we investigate the performance of AdaEQ under  different settings of the initial ensemble size in the Hopper-v2 and Ant environment, i.e., $M=(2,3,5)$ and $M=(3,5,7)$. As shown in Figure \ref{fig:robustM},  AdaEQ  consistently outperforms the others in terms of the average performance over different setups, which implies the benefit of adjusting the in-target ensemble size based on the error feedback. It can be seen from the shaded area that the performance of  AVG and REDQ, may vary significantly when the ensemble size changes.  

{\bf{Robustness to  parameter $c$ in a wide range.}}  As illustrated in Figure \ref{fig:robustc}, we conduct the ablation study by setting $c=0.001, 0.3, 0.5, 0.7, 1.5$ on the Hopper-v2 and Ant tasks. Clearly,  AdaEQ works better for $c \in [0.3, 0.7]$. The experiment results corroborate our analysis in Section \ref{sec:bounds} that our algorithm is not sensitive to parameter $c$ in a wide range. As mentioned in Section \ref{sec:algorithm}, when parameter $c$ is close to zero, AdaEQ degenerates to Maxmin, which is known to suffer from underestimation bias when the ensemble size is large \cite{chen2021randomized}. Further, as illustrated in Figure \ref{fig:ablationbias}, when $c$ is large, e.g., $c=1.5$, the ensemble size would gradually decrease to the minimum and hence would not be able to throttle the overestimation tendency during the learning process.  
}

\section{Conclusion}\label{sec:conclusion}
Determining the right ensemble size is highly nontrivial for the ensemble Q-learning to correct the overestimation without introducing significant underestimation bias.
In this paper, we devise AdaEQ, a generalized ensemble Q-learning method for the ensemble size adaptation, aiming to minimize the estimation bias during the learning process. More specifically, by establishing the upper bound and the lower bound of the estimation bias, we first characterize the impact of both the ensemble size and the time-varying approximation error on the estimation bias. Building upon the theoretic results, we treat the estimation bias minimization as an adaptive control problem, and  take the approximation error as feedback to adjust the ensemble size adaptively during the learning process. Our experiments show that AdaEQ consistently and effectively outperforms the existing ensemble methods, such as REDQ and AVG in MuJoCo tasks, corroborating the benefit of using AdaEQ to drive the estimation bias close to zero. 

There are many important avenues for future work. In terms of the bounds of the estimation bias, our analysis builds upon the standard independent assumption as in previous works. It's worth noting that in practice, the errors are generally correlated \cite{thrun1993issues} and the theoretical analysis for this case remains not well understood.  Additionally, in this work, we use a heuristic tolerance parameter in the adaptation mechanism to strike the balance in controlling the positive bias and negative bias. It is of great interest to develop a systematic approach to optimize this tolerance parameter. 



\begin{ack}
We thank the anonymous reviewers for their constructive comments.  This work was supported in part by NSF grants CNS-2130125, CCSS-2121222 and CNS-2003081. 
\end{ack}

\bibliography{neurips_2021.bib}

\newpage
\appendix
\allowdisplaybreaks
\section*{Appendix}
\section{Proofs}
\subsection{The Proof of Theorem 1}
We first restate the following results from order statistics \cite{lan2020maxmin}.

\begin{lem} \label{lem:1}
Let $X_1, X_2, \cdots, X_M$ be $M$ i.i.d random variables from an absolutely continuous distribution with probability density function (PDF) $f(x)$ and cumulative distribution function (CDF) $F(x)$. Denote $\mu = \mathbf{E}[X_i]$ and $\sigma^2 = Var[X_i] < + \infty$. Let $X_{1:M} \leq X_{2:M} \leq \cdots \leq X_{M:M}$ be the order statistics obtained by reordering these random variables in increasing order of magnitude. Denote the PDF and CDF of $X_{1:M} = \min_{i \in \mathcal{M}}X_i $ as $f_{1:M}(x) $ and $F_{1:M}(x) $. Denote $f_{M:M}(x) $ and $F_{M:M}(x) $ to be the PDF and CDF of $X_{M:M}=\max_{i \in \mathcal{M}}X_i $ . Then we have
\begin{itemize}
    \item[(i)]  $
        f_{1:M}(x)=Mf(x)(1-F(x))^{M-1},~~F_{1:M}(x) = 1-(1-F(x))^M.
    $
    \item[(ii)] 
       $ f_{M:M}(x)=Mf(x)(F(x))^{M-1},~~F_{M:M}(x) = (F(x))^M$.
\end{itemize}
\end{lem}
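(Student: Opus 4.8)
The plan is to prove both parts from first principles: establish each CDF directly from the event description of the maximum and minimum, then obtain the corresponding PDF by differentiation, relying on absolute continuity of the underlying distribution to guarantee that $F$ is differentiable with $F'=f$.

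First I would handle part (ii), the maximum. Since $X_{M:M}=\max_i X_i\le x$ holds pointwise if and only if every $X_i\le x$, I can write $F_{M:M}(x)=\mathbf{P}(\max_i X_i\le x)=\mathbf{P}\!\left(\bigcap_{i=1}^M \{X_i\le x\}\right)$. By independence this factors into $\prod_{i=1}^M \mathbf{P}(X_i\le x)$, and since the $X_i$ are identically distributed with common CDF $F$, each factor equals $F(x)$, giving $F_{M:M}(x)=(F(x))^M$. Differentiating in $x$ and invoking absolute continuity (so that $F'(x)=f(x)$) together with the chain rule yields $f_{M:M}(x)=M(F(x))^{M-1}f(x)$.

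For part (i), the minimum, the argument is symmetric but routed through the complementary event. Since $X_{1:M}=\min_i X_i>x$ if and only if every $X_i>x$, I have $1-F_{1:M}(x)=\mathbf{P}(\min_i X_i>x)=\prod_{i=1}^M \mathbf{P}(X_i>x)=(1-F(x))^M$, again using independence and identical distribution. Rearranging gives $F_{1:M}(x)=1-(1-F(x))^M$, and differentiating (noting $\frac{d}{dx}(1-F(x))=-f(x)$) gives $f_{1:M}(x)=M(1-F(x))^{M-1}f(x)$.

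The argument is entirely elementary, so there is no genuine obstacle; the only points deserving care are the set-theoretic equivalences $\{\max_i X_i\le x\}=\bigcap_i\{X_i\le x\}$ and $\{\min_i X_i>x\}=\bigcap_i\{X_i>x\}$, which hold pointwise, and the appeal to absolute continuity to justify that differentiating the CDF legitimately produces the stated PDF. I note in passing that the finiteness of $\mu$ and $\sigma^2$ plays no role in this lemma itself; those hypotheses are recorded here only because they will be needed in the downstream moment computations that invoke these density formulas.
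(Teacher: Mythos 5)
Your proof is correct and complete: the paper itself states this lemma without proof (citing it as a standard order-statistics result from the Maxmin Q-learning reference), and your derivation --- factoring $\mathbf{P}(\max_i X_i \le x)$ and $\mathbf{P}(\min_i X_i > x)$ via independence and identical distribution, then differentiating under absolute continuity --- is exactly the canonical argument the citation implicitly relies on. Your closing observation that the hypotheses $\mu < \infty$ and $\sigma^2 < \infty$ are inert here (carried along only for the downstream expectation bounds in Theorems 1 and 2) is also accurate.
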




Now we prove Theorem \ref{thm:bound1}.
\theorembound*
\begin{proof}
Assume that  for $i \in \mathcal{K} \subset \mathcal{M} $, $e^i(s,a) \sim \mathcal{U}(-\tau_1,\tau_1)$ with PDF $f_1(x)$  and CDF $F_2(x)$, and for $i\in \mathcal{M}\setminus \mathcal{K}$, $e^i(s,a) \sim \mathcal{U}(-\tau_2,\tau_2)$ with PDF $f_2(x)$  and CDF $F_2(x)$. Without loss of generality, we assume that $\tau_1 > \tau_2 >0$. 
 
From Lemma \ref{lem:1}, it is clear that
\begin{equation}\label{eqn:f1m}
\begin{aligned}
    F_{1:M}(x) =&\mathbf{P} (X_{1:M} \leq x) = \begin{cases} 1- (1-F_1(x))^K & x < -\tau_2 \\ 1- (1-F_1(x))^K(1-F_2(x))^{M-K}& x\in[-\tau_2, \tau_2]  \\ 1 & x>\tau_2 \end{cases},\\
    f_{1:M}(x) = & \frac{d F_{1:M}(x) }{dx},
\end{aligned}
\end{equation}
where $F_1(x) = \frac{1}{2} + \frac{x}{2\tau_1},~f_1(x)= \frac{1}{2\tau_1},~x\in [-\tau_1,\tau_1]$  and $F_2(x) = \frac{1}{2} + \frac{x}{2\tau_2},~f_2(x)= \frac{1}{2\tau_2},~x\in [-\tau_2,\tau_2]$. Denote $F_{1:K}(x) = 1- (1-F_1(x))^K$ and $f_{1:K}(x) = Kf_1(x)(1-F_1(x))^{K-1}$. Then, the estimation bias can be obtained as 
\begin{align}
        \mathbf{E}[Z_{M}] = \ & \gamma \ \mathbf{E}[\max_{a'} \min_{i\in \mathcal{M}} e^i(s,a)] \nonumber\\
        =& \gamma \int_{-\tau_1}^{\tau_1} Axf_{1:M}(x)F_{1:M}(x)^{A-1}dx \nonumber\\
        =&\gamma ( \underbrace{ \int_{-\tau_1}^{-\tau_2}}_{\text{\circled{1}}} + \underbrace{\int_{-\tau_2}^{\tau_2}}_{\text{\circled{2}}}+ \underbrace{\int_{\tau_2}^{\tau_1} }_{\text{\circled{3}}} ) dx. \label{eqn:biasterm}
\end{align}

We first consider term \circled{1} in \eqref{eqn:biasterm}, where
\begin{align*}
    {\circled{1}}  =& \int_{-\tau_1}^{-\tau_2}Axf_{1:K}(x)F_{1:K}(x)^{A-1}dx \\
    =& (\int_{-\tau_1}^{\tau_1} - \int_{-\tau_2}^{\tau_2} - \int_{\tau_2}^{\tau_1})Axf_{1:K}(x)F_{1:K}(x)^{A-1}dx.
\end{align*}

It follows that \circled{1} can be bounded above as follows:
\begin{align*}
    {\circled{1}}  \leq & (\int_{-\tau_1}^{\tau_1} - \int_{-\tau_2}^{\tau_2})Axf_{1:K}(x)F_{1:K}(x)^{A-1}dx\\
    =& \int_{-\tau_1}^{\tau_1}AxKf_1(x)(1-F_1(x))^{K-1}\left( 1-(1-F_1(x))^K \right)^{A-1}dx \\
    &- \int_{-\tau_2}^{\tau_2}AxKf_1(x)(1-F_1(x))^{K-1}\left(  1-(1-F_1(x))^K \right)^{A-1}dx\\
    =& \tau_1 [1-2f_{AK} ] - \tau_2 \left(1-(\frac{1}{2}-\frac{\tau_2}{2\tau_1})^K \right)^A - \tau_2\left(1-(\frac{1}{2}+\frac{\tau_2}{2\tau_1})^K \right)^A +2\tau_1 \underbrace{ \int_{\frac{1}{2}-\frac{\tau_2}{2\tau_1}}^{\frac{1}{2}+\frac{\tau_2}{2\tau_1}}(1-y^K)^A dy}_{\leq f_{AK}} \\
    \leq & \tau_1 - \tau_2 \left(1-(\frac{1}{2}-\frac{\tau_2}{2\tau_1})^K \right)^A - \tau_2\left(1-(\frac{1}{2}+\frac{\tau_2}{2\tau_1})^K \right)^A,
\end{align*}

where $f_{AK}=\frac{1}{K}B(\frac{1}{K},A+1) = \frac{\Gamma(A+1)\Gamma(1+\frac{1}{K})}{\Gamma(A+\frac{1}{K}+1)}=\frac{A(A-1)\cdots 1}{(A+\frac{1}{K})(A+\frac{1}{K}-1)\cdots(1+\frac{1}{K})}$ with $B(\cdot, \cdot)$ being beta function and $y:=\frac{1}{2}-\frac{x}{2\tau_1}$.

We can also have the following   lower bound on \circled{1}:
\begin{align*}
    {\circled{1}}  =&  \tau_1 (1-2f_{AK} ) - \tau_2 \left(1-(\frac{1}{2}-\frac{\tau_2}{2\tau_1})^K \right)^A - \tau_2\left(1-(\frac{1}{2}+\frac{\tau_2}{2\tau_1})^K \right)^A  + 2\tau_1\int_{\frac{1}{2}-\frac{\tau_2}{2\tau_1}}^{\frac{1}{2}+\frac{\tau_2}{2\tau_1}}(1-y^K)^A dy \\
    & + \tau_2 \left(1-(\frac{1}{2}-\frac{\tau_2}{2\tau_1})^K \right)^A + 2\tau_1 \int_{0}^{\frac{1}{2}-\frac{\tau_2}{2\tau_1}} (1-y^K)^A dy\\
    = &   \tau_1 (1-2f_{AK} ) -\tau_2\left(1-(\frac{1}{2}+\frac{\tau_2}{2\tau_1})^K \right)^A + 2\tau_1 \int_{0}^{\frac{1}{2}+\frac{\tau_2}{2\tau_1}} (1-y^K)^A dy\\
     \geq & \tau_1 (1-2f_{AK} ) -\tau_2\left(1-(\frac{1}{2}+\frac{\tau_2}{2\tau_1})^K \right)^A +\tau_1 f_{AK}\\
     = & \tau_1 (1-f_{AK} ) - \tau_2\left(1-(\frac{1}{2}+\frac{\tau_2}{2\tau_1})^K \right)^A.
\end{align*}
 For the term \circled{2} in \eqref{eqn:biasterm}, it follows that
\begin{align*}
 \circled{2} =& \int_{-\tau_2}^{\tau_2}Axf_{1:M}(x)F_{1:M}(x)^{A-1}dx\\
    =& \int_{-\tau_2}^{\tau_2}x d\left[ 1-(\frac{1}{2}-\frac{x}{2\tau_1})^K(\frac{1}{2}-\frac{x}{2\tau_2})^{M-K} \right]^A dx\\
    =& \tau_2 + \tau_2 \left( 1-(\frac{1}{2}+\frac{\tau_2}{2\tau_1})^K \right)^A - \underbrace{\int_{-\tau_2}^{\tau_2}\left[ 1-(\frac{1}{2}-\frac{x}{2\tau_1})^K(\frac{1}{2}-\frac{x}{2\tau_2})^{M-K} \right]^A dx}_{(*)},
\end{align*}
where $(*)$ satisfies that
\begin{align*}
    (*) >& \int_{-\tau_2}^{\tau_2}\left[ 1-(\frac{1}{2}-\frac{x}{2\tau_2})^{M-K} \right]^A dx = 2\tau_2 f_{A(M-K)},\\
    (*) =& \int_{-\tau_2}^{0}+\int_{0}^{\tau_2}\\
    <& \int_{-\tau_2}^{0}[1-(\frac{1}{2}-\frac{x}{2\tau_2})^M]^Adx + \int_{0}^{\tau_2}[1-(\frac{1}{2}-\frac{x}{2\tau_1})^M]^A dx   \\
    <& \tau_2f_{AM} + 2\tau_1 f_{AM}.
\end{align*}

From the definition of $F_{1:M}(x)$, we have that term $\circled{3}$ in \eqref{eqn:biasterm} equals to zero.

Summarizing, we obtain the following upper bound and lower bound on the estimation bias $\mathbf{E}[Z_M] $, for the case with two distributions for   Q-function approximation errors:
\begin{align*}
    \mathbf{E}[Z_M]  \geq & \gamma \left( \tau_1 (1-f_{AK}-2f_{AM}) + \tau_2 (1-2f_{AM}) \right)\\
    \mathbf{E}[Z_M]  \leq &  \gamma \left( \tau_1 +\tau_2[1- 2f_{A(M-K)}]- \tau_2(1-\beta_K)^A  \right)
\end{align*}
where $\beta_K = (\frac{1}{2}-\frac{\tau_2}{2\tau_1})^K$.
\end{proof}

\subsection{Parameter Setting in Numerical Illustration of Figure \ref{fig:alltheorem}}
In Figure \ref{fig:theorem}, the approximation error parameter $\tau_1 = 0.5$, $\tau_2 = 0.4$ and the number of actions is set as $A = 30$. The number of approximators for which the approximation errors follow the first distribution is $K=2$.  In Figure \ref{fig:impact}, we fix $\tau_2 = 0.4$, $A = 30$, $K=2$ and   $\tau_1$ ranges  from $0.5$ to $0.8$. The changes of the `critical' points (red dot and blue dot) are depicted in Figure \ref{fig:impact}.

\subsection{The Proof of Theorem 2}

\generalbound*
\begin{proof}
Assume that the approximation error $e^i(s,a)$ for each Q approximator follows uniform distribution  $\mathcal{U}(-\tau_i,\tau_i)$ with PDF $f_i(x) $ and CDF $F_i(x)$. The PDF and CDF for each approximation error distribution is as follows,
\begin{align*}
    f_i(x) = \begin{cases}
     \frac{1}{2\tau_i},&x\in [-\tau_i,\tau_i]\\
     0,& \text{otherwise}
    \end{cases},~~
    F_i(x)= \begin{cases}
        0, & x<-\tau_i\\
        \frac{x+\tau_i}{2\tau_i}, & x\in [-\tau_i,\tau_i]\\
        1, & x>\tau_i
    \end{cases}& \\
\end{align*}

From Lemma \ref{lem:1}, it can be seen that
\begin{align*}
    F_{1:M}(x) =&\mathbf{P} (X_{1:M} \leq x) = 1- \prod_{i=1}^{M}(1-F_i(x)),\\
    f_{1:M}(x) =&\sum_{i=1}^M \left( f_i(x)\prod_{j\neq i}\left(1-F_j(x)\right) \right).
\end{align*}

Assume that at state $s'$, there are $A$ actions applicable. Then the estimation bias $Z_{M}$ is
    \begin{align*}
        \mathbf{E}[Z_{M}] =& \gamma \mathbf{E}_{\tau_1,\cdots,\tau_M}[\max_{a'} \min_{i\in \mathcal{M}} e^i(s,a)]\\
         =&\gamma\left[ \int_{-\tau_{\max}}^{\tau_{\max}} Axf_{1:M}(s)F_{1:M}(x)^{A-1}dx \right].
    \end{align*}

Considering the integration terms, we conclude that   
\begin{align*}
    & \int_{-\tau_{\max}}^{\tau_{\max}}  Axf_{1:M}(s)F_{1:M}(x)^{A-1}dx\\
    =&\int_{-\tau_{\max}}^{\tau_{\max}} xd(F_{1:M}(x)^{A})\\
    =& x(F_{1:M}(x)^{A})\big|_{-\tau_{\max}}^{\tau_{\max}}-\int_{-\tau_{\max}}^{\tau_{\max}} F_{1:M}(x)^{A}dx\\
    =& \tau_{\max} - {\int_{-\tau_{\max}}^{\tau_{\max}} \left(1- \prod_{i=1}^{M}(1-F_i(x))\right)^{A}dx}\\
    =& \tau_{\max} - \underbrace{\int_{-\tau_{\max}}^{\tau_{\max}} \left(1- (1-F_1(x)) (1-F_2(x))\cdots(1-F_M(x))\right)^{A}dx}_{{\text{\circled{1}}}},
\end{align*}
where $\tau_{\min} = \min_i{\tau_i}$ and $\tau_{\max} = \max_i{\tau_i}$. Denote $f_{AM}=\frac{1}{M}B(\frac{1}{M},A+1)=\frac{\Gamma(A+1)\Gamma(1+\frac{1}{M})}{\Gamma(A+\frac{1}{M}+1)}=\frac{A(A-1)\cdots 1}{(A+\frac{1}{M})(A+\frac{1}{M}-1)\cdots(1+\frac{1}{M})}$ with $B(\cdot,\cdot)$ being beta function.

It first can be seen that \circled{1} satisfies that
\begin{align*}
    \circled{1} \leq & \int_{-\tau_{\min}}^{0} \left( 1- (\frac{1}{2}- \frac{1}{2\tau_{\max}}x)^M \right)^Adx  + \int_{0}^{\tau_{\min}} \left( 1- (\frac{1}{2}- \frac{1}{2\tau_{\min}}x)^M \right)^Adx\\
    &+ \int_{\tau_{\min}}^{\tau_{\max}}dx+ \int_{-\tau_{\max}}^{-\tau_{\min}}(1-(\frac{1}{2}-\frac{x}{2\tau_{\max}})^{M-1})^Adx\\
    = & 2\tau_{\max}\int_{\frac{1}{2}}^{\frac{1}{2}+\frac{\tau_{\min}}{2\tau_{\max}}}(1-y^M)^Ady + 2\tau_{\min} \int_{0}^{\frac{1}{2}}(1-y^M)^A dy \\
    &+ \tau_{\max}-\tau_{\min}+2\tau_{\max}\int_{\frac{1}{2}+\frac{\tau_{\min}}{2\tau_{\max}}}^{1}(1-y^{M-1})^Ady\quad (y:=\frac{1}{2}-\frac{x}{2\tau_{\max}})\\
    \leq & 2\tau_{\max}\int_{0}^{1} (1-y^M)^Ady + \tau_{\max}-\tau_{\min} + \tau_{\max}f_{A(M-1)} \\
    \leq & 2\tau_{\max}f_{AM} + \tau_{\max}-\tau_{\min} + \tau_{\max}f_{A(M-1)}.
    \end{align*}
And the lower bound on  \circled{1} can be obtained as  
\begin{align*}    
    \circled{1} \geq & \int_{-\tau_{\min}}^{0} \left( 1- (\frac{1}{2}- \frac{1}{2\tau_{\min}}x)^M \right)^Adx + \int_{0}^{\tau_{\min}} \left( 1- (\frac{1}{2}- \frac{1}{2\tau_{\max}}x)^M \right)^Adx\\ 
    &+  \tau_{\max}-\tau_{\min} +\int_{-\tau_{\max}}^{-\tau_{\min}}(1-(\frac{1}{2}-\frac{x}{2\tau_{\max}}))^Adx\\
    =&2\tau_{\min}\int_{\frac{1}{2}}^{1}(1-y^M)^Ady + 2\tau_{\max} \int_{\frac{1}{2}-\frac{1}{2}\frac{\tau_{\min}}{\tau_{\max}}}^{\frac{1}{2}} (1-y^M)^A dy +  \tau_{\max}-\tau_{\min} +\frac{\tau_{\max}(1-\tau_{\min}/\tau_{\max})^{A+1}}{2^A(A+1)}\\
    \geq & \tau_{\min}\frac{2}{M} I_{0.5}(\frac{1}{M},A+1) + 2\tau_{\max} \left( \int_{0}^{1} - \int_{\frac{1}{2}}^{1} \right) (1-y^M)^Ady +  \tau_{\max}-\tau_{\min} +\frac{\tau_{\max}(1-\tau_{\min}/\tau_{\max})^{A+1}}{2^A(A+1)}\\
    \geq & 2\tau_{\min}g_{AM} + 2\tau_{\max}f_{AM} - 2\tau_{\max}g_{AM}+  \tau_{\max}-\tau_{\min} +\frac{\tau_{\max}(1-\tau_{\min}/\tau_{\max})^{A+1}}{2^A(A+1)},
\end{align*}   
where $g_{AM} = \frac{1}{M}I_{0.5}(\frac{1}{M},A+1)$ with $I_{0.5}(\cdot, \cdot)$ being the regularized incomplete Beta function. The last inequality is true due to the following result:
\begin{align*}
    \int_{\frac{1}{2}}^{1}(1-y^M)^Ady = & \frac{1}{M} \int_{\left(\frac{1}{2}\right)^M}^{1} t^{\frac{1}{M}-1} (1-t)^A dt\\
    \geq & \frac{1}{M}\int_{\frac{1}{2}}^{1} t^{\frac{1}{M}-1} (1-t)^A dt\\
    = & \frac{1}{M} I_{0.5}(\frac{1}{M},A+1) \quad \quad (t:=y^M)\\
    := & g_{AM}.
\end{align*}

Consequently, we have that $\circled{1}$ satisfies
    
\begin{align*}    
    \circled{1} \geq& \tau_{\max}-\tau_{\min}+\frac{\tau_{\max}(1-\tau_{\min}/\tau_{\max})^{A+1}}{2^A(A+1)}  + 2\tau_{\min}g_{AM} + 2\tau_{\max}f_{AM} - 2\tau_{\max}g_{AM}\\
    \geq & \tau_{\max}-\tau_{\min} + \frac{\tau_{\max}-(A+1)\tau_{\min}}{2^A(A+1)}+\tau_{\max}f_{AM} -2\tau_{\max}g_{AM} + 2\tau_{\min}g_{AM}\\
    \geq& \tau_{\max}(1 + \frac{1}{2^A(A+1)} + f_{AM} - 2g_{AM}) - \tau_{\min}(1+\frac{1}{2^A})\\
    \geq & \tau_{\max}(1 + f_{AM} - 2g_{AM}) - \frac{3}{2}\tau_{\min}\\
    \geq &\tau_{\max}(1 + f_{AM} - 2g_{AM}) - 2\tau_{\min},
\end{align*}

Summarizing, we obtain the following upper bound and lower bound on the estimation bias $\mathbf{E}[Z_M] $, in the general case with heterogeneous distributions for   Q-function approximation errors:
\begin{align*}
 \mathbf{E}[Z_{M}] \geq& \gamma\left(\tau_{\min} - \tau_{\max}(f_{A(M-1)} + 2f_{AM}) \right); \label{eqn:lowerbound2}\\
 \mathbf{E}[Z_{M}] \leq& \gamma\left(2\tau_{\min} - \tau_{\max}\left( f_{AM} - 2g_{AM}\right)\right).
\end{align*}
\end{proof}

\begin{figure}[t]
     \centering
     \begin{subfigure}{0.3\textwidth}
         \centering
         \includegraphics[width=\textwidth]{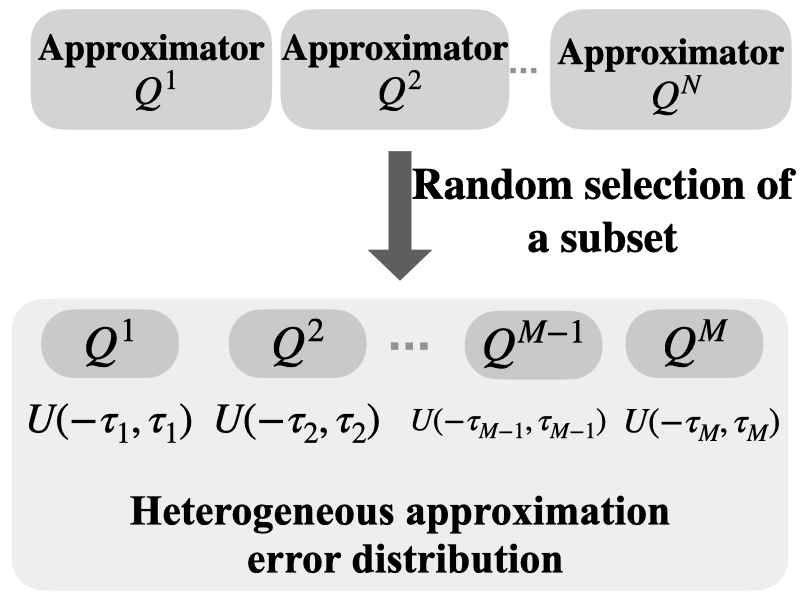}
         \caption{Approximation error distributions settings.}
         \label{fig:appendix2setting}
     \end{subfigure}
     \hfill
     \begin{subfigure}{0.3\textwidth}
         \centering
         \includegraphics[width=\textwidth]{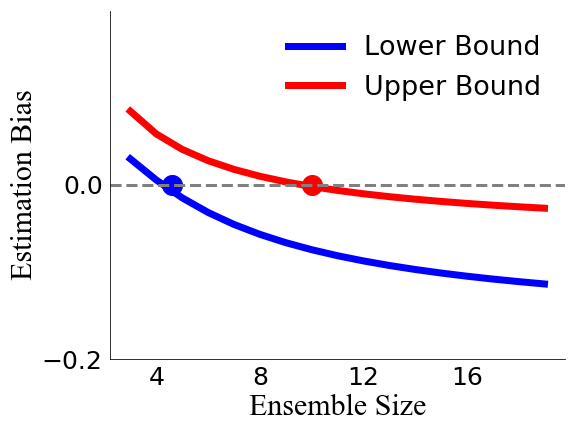}
         \caption{$\tau_{\min}=0.07, \tau_{\max}= 0.1$.}
         \label{fig:appendix2max}
     \end{subfigure}
     \hfill
     \begin{subfigure}{0.3\textwidth}
         \centering
         \includegraphics[width=\textwidth]{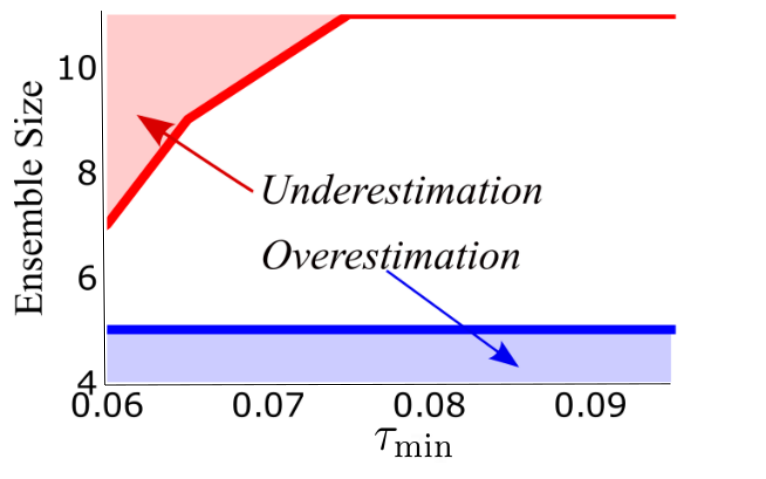}
         \caption{Impact of approximation error on the estimation bias when $\tau_{\max}=0.1$: overestimation vs.~underestimation.}
         \label{fig:appendix2threshold}
     \end{subfigure}
     
      \begin{subfigure}{0.3\textwidth}
         \centering
         \includegraphics[width=\textwidth]{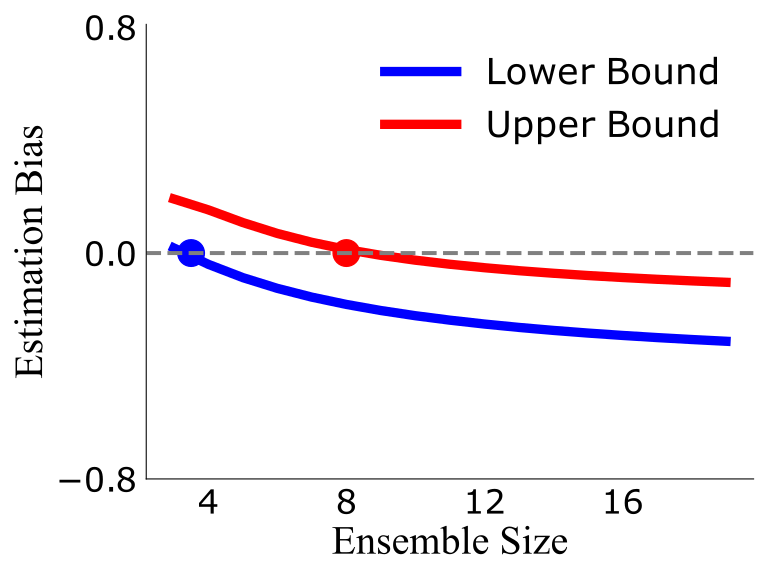}
         \caption{$\tau_{\min}=1, \tau_{\max}= 1$.}
         \label{fig:appendix2min}
     \end{subfigure}
          \begin{subfigure}{0.3\textwidth}
         \centering
         \includegraphics[width=\textwidth]{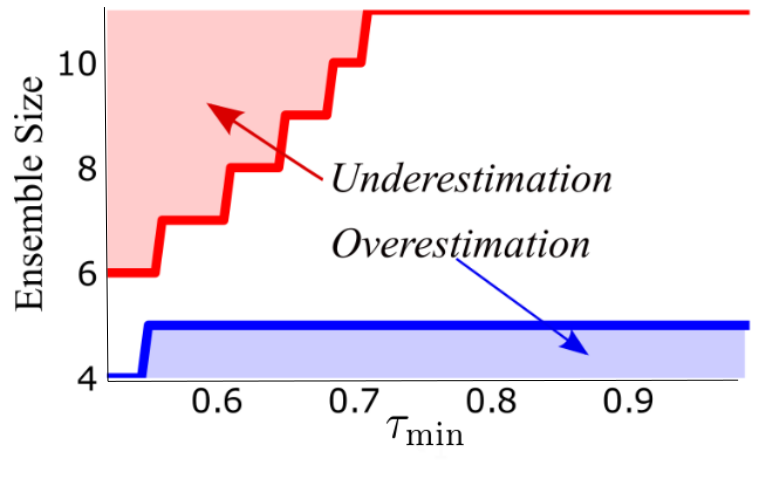}
         \caption{Impact of approximation error on the estimation bias when $\tau_{\max}=1$: overestimation vs.~underestimation.}
         \label{fig:appendix2threshold2}
     \end{subfigure}
        \caption{Illustration of upper bounds and lower bounds on  estimation bias in Theorem \ref{thm:expectation}. (a) The case where the approximation errors of the Q-approximators are heterogeneous. (b),(d) The lower bound and the upper bound for given  $\tau_1$, $\tau_2$, $A$.  (c),(e)  Due to time-varying feature of the approximation errors, the blue curve and the red curve depict the `critical' points for the lower bound and the upper bound, respectively.  }
        \label{fig:appendix2}
\end{figure}

\subsection{Numerical Illustration of Theorem 2}
For a better understanding of Theorem \ref{thm:expectation},
we next provide an example  in Figure \ref{fig:appendix2} following the same line as in Section \ref{sec:bounds}. Figure \ref{fig:appendix2max} shows the case when $\tau_{\max}$ is small, i.e., $\tau_{\max} = 0.1$. Consistent with our analysis in Section \ref{sec:bounds}, increasing the ensemble size $M > 10$ (red point) will lead to underestimation bias (upper bound is negative). It can be seen clearly in Figure \ref{fig:appendix2threshold} that, when $\tau_{\min}$ is small, the underestimation is the major issue when increasing the ensemble. On the other hand, Figure \ref{fig:appendix2min} demonstrates the case when $\tau_{\min}$ is large, i.e., $\tau_{\min} = \tau_{\max} = 1$. In this case, decreasing the ensemble size is `likely' to cause overestimation. Similarly, we can observe from Figure \ref{fig:appendix2threshold2} that when increasing $\tau_{\min}$, the ensemble size $M$ should be increased to avoid the overestimation (the blue shaded area). In this example, we set the number of actions as $A=75$.

\subsection{Convergence Analysis of AdaEQ}
Assume that at iteration $t$, the ensemble size is $M_t \leq N$, where $N$ is the number of approximators in the ensemble method. It is straightforward to verify that the stochastic approximation noise term  has the contraction property as stated in \cite{chen2021randomized} Appendix A.4 and \cite{lan2020maxmin} Appendix B. It follows that AdaEQ converges to the optimal Q-function with probability 1.

\section{Experiments}
\subsection{Hyperparameters and Implementation Details}
In the empirical implementation, our code for AdaEQ is partly based on REDQ authors' open source code (\url{https://github.com/watchernyu/REDQ}) \cite{chen2021randomized} and we use the identical hyperparameter setting, for the sake of fair comparison. For all three methods compared in our experiments, the first 5000 data points are obtained by randomly sampling from the action space without updating the Q-networks. For REDQ, we use the fine-tuned ensemble size $M=2$ for all the MuJoCo benchmark tests. The results are similar with the reported results in the original paper. The detailed hyperparameter setting is summarized in Table \ref{table:parameter}. 
\begin{table}[t]
  \caption{A comparison of hyperparameter settings among AdaEQ, REDQ \cite{chen2021randomized} and AVG \cite{anschel2017averaged} implementation.}
  \label{table:parameter}
  \centering
  \begin{tabular}{llll}
    \toprule
    Hyperparameter     & AdaEQ (Our Method)     & REDQ  & AVG\\
    \midrule
    Learning Rate & $3 \cdot 10^{-4}$ & $3 \cdot 10^{-4}$&  $3 \cdot 10^{-4}$   \\
    Discount Factor ($\gamma$)     & 0.99 & 0.99    &  0.99 \\
    Optimizer     & Adam       & Adam  &  Adam\\
    Target Smoothing Coefficient ($\rho$) & $5 \cdot 10^{-3}$ &$5 \cdot 10^{-3}$ &  $5 \cdot 10^{-3}$\\
    Batch Size & 256 & 256 & 256 \\
    Replay Buffer Size & $10^6$& $10^6$&  $10^6$\\
    Non-linearity & ReLU & ReLU &  ReLU\\
    Number of Hidden Layers  &  2 & 2 & 2  \\
    Number of Hidden Unites per Layer & 256 & 256  & 256 \\
    Number of Approximators ($N$) & 10 & 10 & 10 \\
    Testing Trajectory Length $H$  & 500 & 500 & 500 \\
    \midrule
    Initial Ensemble Size ($M_0$) & 4 & 2 &  10\\
    Ensemble Size Adaptation & True & False &  False \\
    Adaptation Frequency & Every 10 epochs &- & -\\
    `Tolerance' Parameter $c$ & 0.3 & - & -\\
    \bottomrule
  \end{tabular}
\end{table}

\subsection{Additional Empirical Results on MuJoCo Benchmark}

{\bf{Training Time Comparison.}}  In Table \ref{table:time}, we compare the average wall-clock training time over three training seeds. All the tasks are trained on the same 2080Ti GPU. It can be seen that the ensemble size adaptation mechanism does not significantly increase the training time for most tasks.

\begin{table}[h]
  \caption{A comparison of training time among AdaEQ, REDQ  \cite{chen2021randomized} and AVG \cite{anschel2017averaged} implementation.}
  \label{table:time}
  \centering
  \begin{tabular}{lllll}
    \toprule
    Environment     & Hopper-v2 125K     & Walker2d 300K  & Ant 300 K & Humanoid 250K\\
    \midrule
    AdaEQ & 62509.38 & 120598.13 &  130673.32  &  148786.28 \\
    REDQ     & 61565.28 & 118604.28      &   122954.04  & 104462.43 \\
    AVG     &   172115.94     & 151058.98   & 129526.58  & 124834.21\\
    AdaEQ/REDQ & 1.01$\times$ & 1.02$\times$  & 1.06$\times$ & 1.42$\times$  \\
    \bottomrule
  \end{tabular}
\end{table}

\subsection{Illustrative Examples for Time-Varying Approximation Errors}

We present in Figure \ref{fig:approximationerror} an  example to illustrate the   time-varying nature of approximation errors during the training process. Following the setting in Section \ref{subsec:example}, we use 3 different approximators to approximate the true action-value. Each approximator is a 6-degree polynomial that fits the samples. The initial approximation errors used to generate samples are set to follow uniform distribution with parameter $\tau = 0.3, 0.5, 0.7$, respectively. At each iteration, we first generate new samples from the approximator obtained in the last iteration and then update the approximator using these samples. The mean and standard derivation of the approximation error over different states are depicted in Figure \ref{fig:meanerror} and \ref{fig:stderror}. Clearly, the approximation error is time-varying and can change dramatically during the training process. 

\begin{figure}[!ht]
\centering
\begin{subfigure}{0.32\textwidth}
         \centering
         \includegraphics[width=\textwidth]{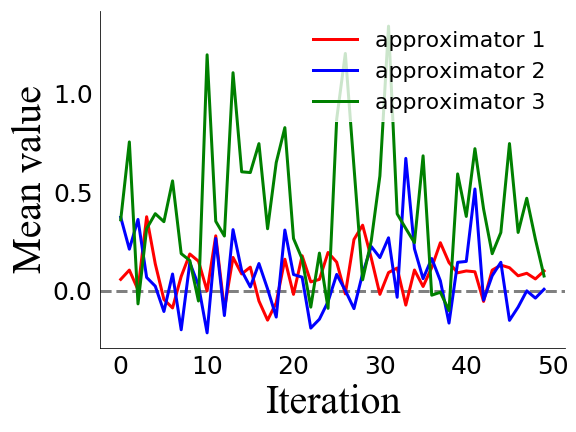}
         \caption{}
         \label{fig:meanerror}
     \end{subfigure}
     \begin{subfigure}{0.32\textwidth}
         \centering
         \includegraphics[width=\textwidth]{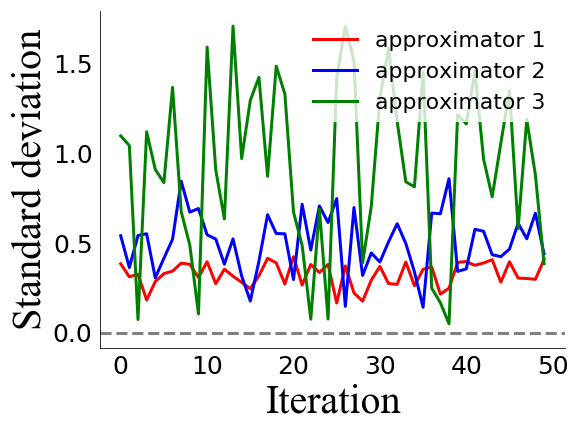}
         \caption{}
         \label{fig:stderror}
     \end{subfigure}
     \caption{Illustration of the time-varying approximation error during the training process. (a) Mean approximation error over states. (b) Standard derivation of the approximation error over states.}
         \label{fig:approximationerror}
\end{figure}

\subsection{Performance Comparison with TD3 and SAC}

\begin{table}[h]
  \caption{A Comparison of Average Return  among AdaEQ (proposed method), REDQ  \cite{chen2021randomized}, AVG \cite{anschel2017averaged}, SAC \cite{haarnoja2018soft} and TD3 \cite{fujimoto2018addressing}}
  \label{table:compareSAC}
  \centering
  \begin{tabular}{lllll}
    \toprule
    Environment     & Hopper-v2 125K     & Walker2d 300K  & Ant 300K & Humanoid 250K\\
    \midrule
    AdaEQ & 3372 & 4012 &  5241  &  4982 \\
    REDQ & 3117  &  3871     & 5013   & 4521 \\
    AVG &  1982     & 2736  & 2997  & 2015\\
    SAC & 2404 & 2556  & 2485 &   1523\\
    TD3 & 982  & 3624  & 2048 &  - \\
    \bottomrule
  \end{tabular}
\end{table}

\end{document}